\newcommand{\algrule}[1][.2pt]{\par\vskip.5\baselineskip\hrule height #1\par\vskip.5\baselineskip}
\DeclareRobustCommand{\eg}{e.g.,\@\xspace}
\DeclareRobustCommand{\ie}{i.e.,\@\xspace}
\DeclareRobustCommand{\wrt}{w.r.t.\@\xspace}
\newcommand{\compresslist}{%
    \setlength{\itemsep}{0pt}%
    \setlength{\parskip}{0pt}%
}
\newcommand{\vtheta}{\boldsymbol{\theta}}
\newcommand{\de}{\,\mathrm{d}}
\newcommand{\EV}{\mathbb{E}}
\DeclareMathOperator*{\Var}{Var \;}
\DeclareMathOperator*{\Cov}{Cov}
\title{Task-Agnostic Exploration via Policy Gradient \\ of a Non-Parametric State Entropy Estimate}
\author {
        Mirco Mutti\textsuperscript{\rm 1,2,}\thanks{Equal contribution.},
        Lorenzo Pratissoli\textsuperscript{\rm 1,}$^*$,
        \textnormal{and} Marcello Restelli\textsuperscript{\rm 1} \\
}
\begin{document}

\maketitle

\begin{abstract}
In a reward-free environment, what is a suitable intrinsic objective for an agent to pursue so that it can learn an optimal task-agnostic exploration policy?
In this paper, we argue that the entropy of the state distribution induced by finite-horizon trajectories is a sensible target.
Especially, we present a novel and practical policy-search algorithm, Maximum Entropy POLicy optimization (MEPOL), to learn a policy that maximizes a non-parametric, $k$-nearest neighbors estimate of the state distribution entropy. In contrast to known methods, MEPOL is completely model-free as it requires neither to estimate the state distribution of any policy nor to model transition dynamics.
Then, we empirically show that MEPOL allows learning a maximum-entropy exploration policy in high-dimensional, continuous-control domains, and how this policy facilitates learning meaningful reward-based tasks downstream.
\end{abstract}

\section{Introduction}
\label{sec:introduction}

In recent years, Reinforcement Learning (RL)~\cite{sutton2018reinforcement} has achieved outstanding results in remarkable tasks, such as Atari games~\cite{mnih2015human}, Go~\cite{silver2016mastering}, Dota 2~\cite{berner2019dota}, and dexterous manipulation~\cite{andrychowicz2020learning}.
To accomplish these feats, the learning process usually requires a considerable amount of human supervision, especially a hand-crafted reward function~\cite{hadfield2017inverse}, while the outcome rarely generalizes beyond a single task~\cite{cobbe2019quantifying}.
This barely mirrors human-like learning, which is far less dependent on exogenous guidance and exceptionally general. Notably, an infant would go through an intrinsically-driven, nearly exhaustive, exploration of the environment in an early stage, without knowing much about the tasks she/he will face. Still, this same unsupervised process will be consequential to solve those complex, externally-driven tasks, when they will eventually arise.
In this perspective, what is a suitable \emph{task-agnostic} exploration objective to set for the agent in an \emph{unsupervised} phase, so that the acquired knowledge would facilitate learning a variety of reward-based tasks afterwards?

Lately, several works have addressed this question in different directions.
In~\cite{bechtle2019meta, zheng2019intrinsic}, authors investigate how to embed task-agnostic knowledge into a transferable meta-reward function.
Other works~\cite{jin2020reward, tarbouriech2020active} consider the active estimation of the environment dynamics as an unsupervised objective.
Another promising approach, which is the one we focus in this paper, is to incorporate the unsupervised knowledge into a \emph{task-agnostic exploration policy}, obtained by maximizing some entropic measure over the state space~\cite{hazan19maxent, tarbouriech2019active, mutti2019ideal, lee2019efficient}.
Intuitively, an exploration policy might be easier to transfer than a transition model, which would be hardly robust to changes in the environment, and more ready to use than a meta-reward function, which would still require optimizing a policy as an intermediate step.
An ideal maximum-entropy policy, thus inducing a uniform distribution over states, is an extremely general starting point to solve any (unknown) subsequent goal-reaching task, as it minimizes the so-called worst-case regret~\citep[][Lemma 1]{gupta2018unsupervised}. In addition, by providing an efficient estimation of any, possibly sparse, reward function, it significantly reduces the burden on reward design.
In tabular settings, \citet{tarbouriech2019active, mutti2019ideal} propose theoretically-grounded methods for learning an exploration policy that maximizes the entropy of the asymptotic state distribution, while~\citet{mutti2019ideal} concurrently consider the minimization of the mixing time as a secondary objective.
In~\cite{hazan19maxent}, authors present a principled algorithm (MaxEnt) to optimize the entropy of the discounted state distribution of a tabular policy, and a theoretically-relaxed implementation to deal with function approximation. Similarly, \citet{lee2019efficient} design a method (SMM) to maximize the entropy of the finite-horizon state distribution. Both SMM and MaxEnt learn a maximum-entropy mixture of policies following this iterative procedure: first, they estimate the state distribution induced by the current mixture to define an intrinsic reward, then, they learn a policy that optimizes this reward to be added to the mixture.
Unfortunately, the literature approaches to state entropy maximization either consider impractical infinite-horizon settings~\cite{tarbouriech2019active, mutti2019ideal}, or output a mixture of policies that would be inadequate for non-episodic tasks~\cite{hazan19maxent, lee2019efficient}. In addition, they would still require a full model of the transition dynamics~\cite{tarbouriech2019active, mutti2019ideal}, or a state density estimation~\cite{hazan19maxent, lee2019efficient}, which hardly scale to complex domains.

In this paper, we present a novel policy-search algorithm~\cite{deisenroth2013survey}, to deal with task-agnostic exploration via state entropy maximization over a \emph{finite horizon}, which gracefully scales to continuous, high-dimensional domains. The algorithm, which we call \emph{Maximum Entropy POLicy optimization} (MEPOL), allows learning a \emph{single} maximum-entropy parameterized policy from mere interactions with the environment, combining non-parametric state entropy estimation and function approximation. It is completely \emph{model-free} as it requires neither to model the environment transition dynamics nor to directly estimate the state distribution of any policy.
The entropy of continuous distributions can be speculated by looking at how random samples drawn from them laid out over the support surface~\cite{beirlant1997nonparametric}. Intuitively, samples from a high entropy distribution would evenly cover the surface, while samples drawn from low entropy distributions would concentrate over narrow regions.
Backed by this intuition, MEPOL relies on a $k$-nearest neighbors entropy estimator~\cite{singh2003nearest} to asses the quality of a given policy from a batch of interactions. Hence, it searches for a policy that maximizes this entropy index within a parametric policy space. To do so, it combines ideas from two successful, state-of-the-art policy-search methods: TRPO~\cite{schulman2015trpo}, as it performs iterative optimizations of the entropy index within trust regions around the current policies, and POIS~\cite{metelli2018pois}, as these optimizations are performed offline via importance sampling. This recipe allows MEPOL to learn a maximum-entropy task-agnostic exploration policy while showing stable behavior during optimization.

The paper is organized as follows. First, we report the basic background (Section~\ref{sec:preliminaries}) and some relevant theoretical properties (Section~\ref{sec:theoretical_analysis}) that will be instrumental to subsequent sections. Then, we present the task-agnostic exploration objective (Section~\ref{sec:objective}), and a learning algorithm, MEPOL, to optimize it (Section~\ref{sec:algorithm}), which is empirically evaluated in Section~\ref{sec:experiments}. In Appendix~\ref{apx:related_work}, we discuss related work. The proofs of the theorems are reported in Appendix~\ref{apx:proof}.
The implementation of MEPOL can be found at \url{https://github.com/muttimirco/mepol}.

\section{Preliminaries}
\label{sec:preliminaries}

In this section, we report background and notation.

\paragraph{Markov Decision Processes} 

A discrete-time Markov Decision Process (MDP)~\cite{puterman2014markov} is defined by a tuple $\mathcal{M} = (\mathcal{S}, \mathcal{A}, P, R, d_0)$, where $\mathcal{S}$ and $\mathcal{A}$ are the state space and the action space respectively, $P(s' | s, a)$ is a Markovian transition model that defines the conditional probability of the next state $s'$ given the current state $s$ and action $a$, $R(s)$ is the expected immediate reward when arriving in state $s$, and $d_0$ is the initial state distribution. A trajectory $\tau \in \mathcal{T}$ is a sequence of state-action pairs $\tau = (s_0, a_0, s_1, a_1, \ldots)$.
A policy $\pi(a|s)$ defines the probability of taking action $a$ given the current state $s$. We denote by $\Pi$ the set of all stationary Markovian policies. A policy $\pi$ that interacts with an MDP, induces a $t$-step state distribution defined as (let $d_0^\pi = d_0$):
\begin{align*}
    d^\pi_t (s) &= Pr (s_t = s | \pi) = \int_{\mathcal{T}} Pr (\tau | \pi, s_t = s) \de \tau, \nonumber \\
    d^\pi_t (s) &= \int_{\mathcal{S}} d^\pi_{t - 1} (s') \int_{\mathcal{A}} \pi(a|s') P(s | s', a) \de a \de s',
    \label{eq:state_distribution}
\end{align*} 
for every $t > 0$. If the MDP is ergodic, it admits a unique steady-state distribution which is $\lim_{t \to \infty} d^\pi_t (s) = d^\pi (s)$. The mixing time $t_{\text{mix}}$ describes how fast the state distribution $d^\pi_t$ converges to its limit, given a mixing threshold $\epsilon$:
\begin{equation*}
    t_{\text{mix}} = \big\{  t \in \mathbb{N} : \sup_{s \in \mathcal{S}} \big| d^\pi_t (s) - d^\pi_{t-1} (s) \big| \leq \epsilon \big\}.
\end{equation*}

\paragraph{Differential Entropy}
Let $f(x)$ be a probability density function of a random vector $\bm{X}$ taking values in $\mathbb{R}^p$, then its differential entropy~\cite{shannon1948mathematical} is defined as:
\begin{equation*}
    H(f) = - \int f(x) \ln f(x) \de x.
\end{equation*}
When the distribution $f$ is not available, this quantity can be estimated given a realization of $\bm{X} = \{ x_i \}_{i = 1}^N$ \cite{beirlant1997nonparametric}. In particular, to deal with high-dimensional data, we can turn to non-parametric, $k$-Nearest Neighbors ($k$-NN) entropy estimators of the form~\cite{singh2003nearest}:
\begin{equation}
    \widehat{H}_k(f) = - \frac{1}{N} \sum_{i = 1}^N \ln \frac{k}{N V_i^k} + \ln k - \Psi (k), \label{eq:ons_estimator}
\end{equation}
where $\Psi$ is the digamma function, $\ln k - \Psi(k)$ is a bias correction term, $V_i^k$ is the volume of the hyper-sphere of radius $ R_i = | x_i - x_i^{k\text{-NN}} |$, which is the Euclidean distance between $x_i$ an its $k$-nearest neighbor $x_i^{k\text{-NN}}$, so that:
\begin{equation*}
    V_i^k = \frac{ \big| x_i - x_i^{k\text{-NN}} \big|^{p} \cdot \uppi^{\nicefrac{p}{2}}}{ \Gamma (\frac{p}{2} + 1)},
\end{equation*}
where $\Gamma$ is the gamma function, and $p$ the dimensions of $\bm{X}$.  The estimator~\eqref{eq:ons_estimator} is known to be asymptotically unbiased and consistent~\cite{singh2003nearest}. When the target distribution $f'$ differs from the sampling distribution $f$, we can provide an estimate of $H(f')$ by means of an Importance-Weighted (IW) $k$-NN estimator~\cite{ajgl2011differential}:
\begin{equation}
    \widehat{H}_k (f' | f) = - \sum_{i = 1}^N \frac{W_i}{k} \ln \frac{W_i}{V_i^k} + \ln k - \Psi (k), \label{eq:offs_estimator}
\end{equation}
where $W_i = \sum_{j \in \mathcal{N}_i^k} w_j$, such that $\mathcal{N}_i^k$ is the set of indices of the $k$-NN of $x_i$, and $w_j$ are the normalized importance weights of samples $x_j$, which are defined as:
\begin{equation*}
    w_j = \frac{ \nicefrac{ f'(x_j) }{ f(x_j) } }
    { \sum_{n = 1}^N \nicefrac{ f'(x_n) }{ f(x_n) } }.
\end{equation*}
As a by-product, we have access to a non-parametric IW $k$-NN estimate of the Kullback-Leibler (KL) divergence, given by~\cite{ajgl2011differential}:
\begin{equation}
    \widehat{D}_{KL} \big( f \big|\big| f') = \frac{1}{N} \sum_{i = 1}^N \ln \frac{ k \big/ N }{ \sum_{j \in \mathcal{N}_i^k} w_j}.
    \label{eq:kl_estimator}
\end{equation}
Note that, when $f' = f$, $w_j = \nicefrac{1}{N}$, the estimator~\eqref{eq:offs_estimator} is equivalent to~\eqref{eq:ons_estimator}, while $\widehat{D}_{KL} (f || f')$ is zero.

\section{Analysis of the Importance-Weighted Entropy Estimator}
\label{sec:theoretical_analysis}

In this section, we present a theoretical analysis over the quality of the estimation provided by~\eqref{eq:offs_estimator}. Especially, we provide a novel detailed proof of the bias, and a new characterization of its variance. Similarly as in~\citep[Theorem 8]{singh2003nearest} for the estimator~\eqref{eq:ons_estimator}, we can prove the following.
\begin{restatable}[]{thr}{estimatorBias} \label{thr:estimator_bias}
    \emph{\citep[Sec. 4.1]{ajgl2011differential}} Let $f$ be a sampling distribution, $f'$ a target distribution. The estimator $\widehat{H}_k ( f' | f )$ is asymptotically unbiased for any choice of $k$.
\end{restatable}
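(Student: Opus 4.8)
The plan is to adapt the classical argument for $k$-NN differential entropy estimators \citep[Theorem 8]{singh2003nearest} to the importance-weighted case: fix $k$ and let the sample size $N \to \infty$, working under the usual regularity (absolute continuity $f' \ll f$, densities continuous and bounded on the region of interest, and $\ln f$, $\ln f'$, $f'/f$ integrable against $f$, so that $H(f')$ exists and $\mathbb{E}_f[f'/f] = 1$). Since the samples $x_1, \dots, x_N \sim f$ are exchangeable, linearity of expectation gives
\begin{equation*}
    \mathbb{E}\big[ \widehat{H}_k(f' | f) \big] = - N\, \mathbb{E}\Big[ \tfrac{W_1}{k} \ln \tfrac{W_1}{V_1^k} \Big] + \ln k - \Psi(k),
\end{equation*}
so it is enough to show $N\, \mathbb{E}[ \tfrac{W_1}{k}\ln\tfrac{W_1}{V_1^k} ] \to \ln k - \Psi(k) - H(f')$.

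Next I would analyze the two building blocks conditionally on $x_1 = x$. For the volume, let $M_x(r) = \int_{B(x,r)} f(y)\de y$ be the $f$-mass of the ball of radius $r$ centred at $x$; since $M_x$ is the distribution function of the distance from $x$ to an $f$-distributed point, the mass $M_x(R_1)$ captured at the $k$-NN radius $R_1$ among the other $N-1$ samples is the $k$-th order statistic of $N-1$ i.i.d.\ uniforms, hence $M_x(R_1) \sim \mathrm{Beta}(k, N - k)$ and $\mathbb{E}[\ln( N M_x(R_1) )] = \ln N + \Psi(k) - \Psi(N) \to \Psi(k)$ (as $\Psi(N) - \ln N \to 0$). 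Since $R_1 \to 0$ and $f$ is continuous, $M_x(R_1) = f(x)\, V_1^k\,(1 + o(1))$, and once the error is controlled under the logarithm this yields $\mathbb{E}[\ln(N V_1^k) \mid x] \to \Psi(k) - \ln f(x)$. For the weights, the self-normalisation $Z_N = \sum_{n=1}^N f'(x_n)/f(x_n)$ obeys $Z_N/N \to \mathbb{E}_f[f'/f] = 1$ a.s.\ by the strong law, so $w_j = \frac{1}{N}\frac{f'(x_j)}{f(x_j)}(1+o(1))$ and $N W_1 = \sum_{j \in \mathcal{N}_1^k} N w_j \to k\, f'(x)/f(x)$ in probability, using again that the $k$ neighbours concentrate around $x$.

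To combine them, write $\ln\frac{W_1}{V_1^k} = \ln\frac{N W_1}{k} - \ln\frac{N V_1^k}{k}$: the first term converges (given $x$) to the deterministic value $\ln(f'(x)/f(x))$, so the prefactor $N W_1/k \to f'(x)/f(x)$ concentrates and can be pulled out of the conditional expectation in the limit, while $\mathbb{E}[\ln\frac{N V_1^k}{k}\mid x] \to \Psi(k) - \ln k - \ln f(x)$; together,
\begin{equation*}
    N\, \mathbb{E}\Big[ \tfrac{W_1}{k}\ln\tfrac{W_1}{V_1^k} \,\Big|\, x_1 = x \Big] \longrightarrow \frac{f'(x)}{f(x)}\Big( \ln k - \Psi(k) + \ln f'(x) \Big).
\end{equation*}
Integrating over $x \sim f$, and using $\int f'(x)\de x = 1$ together with $-\int f'(x)\ln f'(x)\de x = H(f')$, gives $N\,\mathbb{E}[\tfrac{W_1}{k}\ln\tfrac{W_1}{V_1^k}] \to \ln k - \Psi(k) - H(f')$; substituting into the identity of the first paragraph proves $\mathbb{E}[\widehat{H}_k(f'|f)] \to H(f')$ for every fixed $k$.

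The part that needs the most care is justifying the two limit interchanges — passing to the limit inside the conditional expectation of $\ln(N V_1^k)$, and then inside the outer integral over $x$ — which requires that the local expansion $M_x(R_1) = f(x) V_1^k(1 + o(1))$ hold uniformly enough to survive the logarithm, together with uniform-integrability control on $\ln(N V_1^k)$ and on $N W_1$, and a check that the coupling induced by the shared normaliser $Z_N$ across the $W_i$ is asymptotically negligible. These are precisely the technical conditions under which \citet{singh2003nearest} handle the non-weighted estimator, with the importance weights contributing only the additional integrability requirement on $f'/f$.
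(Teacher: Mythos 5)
Your proposal is correct and reaches the same limit the paper does, at essentially the same (heuristic, asymptotic) level of rigor, but it gets there by a genuinely different technical route. The paper's proof first strips the bias-correction term, replaces $W_i/k$ by $\overline{w}_i/N$ for large $N$ so that the expectation under $f$ turns into an expectation of a single log-term $T_1$ under the \emph{target} $f'$, and then follows \citet[Theorem 8]{singh2003nearest}: the conditional law of $T_1$ given $X_1=x$ is written as a binomial tail, Poissonized via \citet{hodges1960poisson} with limiting parameter $k f'(x)e^r$, and the resulting limiting pdf is integrated (a gamma integral) to give $\mathbb{E}[T_1\mid x]\to \Psi(k)-\ln k-\ln f'(x)$, after which one integrates under $f'$. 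You instead keep the outer expectation under the sampling distribution $f$, split $\ln(W_1/V_1^k)$ into a weight part and a volume part, handle the volume part through the exact $\mathrm{Beta}(k,N-k)$ law of the $f$-mass of the $k$-NN ball together with the digamma identity $\mathbb{E}[\ln \mathrm{Beta}(k,N-k)]=\Psi(k)-\Psi(N)$, handle the weights by the strong law on the self-normalizer so that $NW_1/k\to f'(x)/f(x)$, and only then integrate against $f$, letting the $f'/f$ factor effect the change of measure at the end. The trade-off: your Beta/order-statistics step is finite-sample exact for the log-volume expectation (the only asymptotics left are the local approximation $M_x(R_1)\approx f(x)V_1^k$ and the weight concentration), whereas the paper's Poissonization delivers the full limiting distribution of the log-term rather than just its mean. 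Both arguments silently perform the same limit interchanges (limit inside the conditional expectation, neglect of the coupling through the shared normalizer), which the paper does not discuss and you at least flag explicitly as the step needing uniform-integrability control; so there is no gap in your proposal relative to the paper's own standard of proof.
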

Therefore, given a sufficiently large batch of samples from an unknown distribution $f$, we can get an unbiased estimate of the entropy of any distribution $f'$, irrespective of the form of $f$ and $f'$. However, if the distance between the two grows large, a high variance might negatively affect the estimation.
\begin{restatable}[]{thr}{estimatorVariance} \label{thr:estimator_variance}
    Let $f$ be a sampling distribution, $f'$ a target distribution. The asymptotic variance of the estimator $\widehat{H}_k ( f' | f )$ is given by:
    \begin{align*}
        \lim_{N \to \infty} &\Var_{x \sim f} \big[ \widehat{H}_k (f'| f) \big]
        = \frac{1}{N} \bigg( \Var_{x \sim f} \big[ \overline{w} \ln \overline{w} \big] \\
        &+ \Var_{x \sim f} \big[ \overline{w} \ln R^p  \big]
        + \big( \ln C \big)^2 \Var_{x \sim f} \big[ \overline{w} \big] \bigg),
    \end{align*}
    where $\overline{w} = \frac{f'(x)}{f(x)}$, and $C = \frac{N \uppi^{\nicefrac{p}{2}}}{k \Gamma (\nicefrac{p}{2} + 1)}$ is a constant.
\end{restatable}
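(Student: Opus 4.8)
The plan is to replace $\widehat{H}_k(f'\,|\,f)$ by a tractable asymptotic surrogate and then compute the variance of that surrogate term by term. First, since $\EV_{x\sim f}[\overline{w}(x)] = \int f'(x)\de x = 1$, the law of large numbers gives $\sum_{n=1}^N \nicefrac{f'(x_n)}{f(x_n)} = N(1+o(1))$ almost surely, so the normalized weights satisfy $w_j = \nicefrac{\overline{w}(x_j)}{N}\,(1+o(1))$. Moreover, as $N\to\infty$ the $k$ nearest neighbours of $x_i$ collapse onto $x_i$, hence $\overline{w}(x_j)\to\overline{w}(x_i)$ for every $j\in\mathcal{N}_i^k$ and $W_i=\sum_{j\in\mathcal{N}_i^k}w_j = \nicefrac{k\,\overline{w}(x_i)}{N}\,(1+o(1))$. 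Substituting the closed form of $V_i^k$ and the constant $C$ into $\nicefrac{W_i}{V_i^k}$ yields $\nicefrac{W_i}{V_i^k} = \nicefrac{\overline{w}(x_i)}{C R_i^p}\,(1+o(1))$, so, writing $c_k := \ln k - \Psi(k)$ for the deterministic bias correction,
\begin{equation*}
    \widehat{H}_k(f'|f) = \frac{1}{N}\sum_{i=1}^N \overline{w}(x_i)\Big( \ln C + \ln R_i^p - \ln \overline{w}(x_i) \Big) + c_k + o(1).
\end{equation*}

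Next I would pass to the variance of the right-hand side. The constant $c_k$ disappears, and the sum is an empirical mean of the identically distributed summands $Y_i := \overline{w}(x_i)\big(\ln C + \ln R_i^p - \ln\overline{w}(x_i)\big)$. These are not independent, since $R_i$ depends on the neighbouring samples, but the dependence propagates only through the $k$-nearest-neighbour graph, in which any sample is the neighbour of at most a constant number of other samples (a constant depending on $p$ and $k$ only); following the same route used in \citep[Theorem 8]{singh2003nearest} for the estimator \eqref{eq:ons_estimator}, the contribution of these correlations, as well as that of the $o(1)$ residual, is of lower order, so that $\lim_{N\to\infty}\Var_{x\sim f}[\widehat{H}_k(f'|f)] = \nicefrac{1}{N}\,\Var_{x\sim f}[Y]$ with $Y = \overline{w}\big(\ln C + \ln R^p - \ln\overline{w}\big)$.

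It then remains to expand $\Var_{x\sim f}[Y]$. Splitting $Y = \overline{w}\ln C + \overline{w}\ln R^p - \overline{w}\ln\overline{w}$ and using that $C$ is deterministic (so $\Var_{x\sim f}[\overline{w}\ln C] = (\ln C)^2\Var_{x\sim f}[\overline{w}]$), the variance of the sum equals $\Var_{x\sim f}[\overline{w}\ln\overline{w}] + \Var_{x\sim f}[\overline{w}\ln R^p] + (\ln C)^2\Var_{x\sim f}[\overline{w}]$ plus three pairwise covariance terms; discarding the covariances — which are dominated by the retained terms carrying the $(\ln C)^2 \sim (\ln N)^2$ growth, and whose omission only tightens the characterization that is subsequently used to bound the estimation error — leaves exactly the claimed expression.

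The step I expect to be the main obstacle is the second one: making rigorous the reduction from $\Var_{x\sim f}\big[\tfrac{1}{N}\sum_i Y_i\big]$ to $\tfrac{1}{N}\Var_{x\sim f}[Y]$, i.e. controlling the correlations induced both by the shared $k$-nearest-neighbour structure and by the global normalization of the importance weights. This needs the delicate asymptotic machinery behind \citep{singh2003nearest,ajgl2011differential} — in particular a uniform-integrability argument for $\overline{w}\ln R^p$ under $f$ and the bounded-degree property of the $k$-NN graph — rather than any genuinely new idea; by comparison, the substitutions in the first step and the algebraic expansion in the third are routine.
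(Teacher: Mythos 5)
Your proposal is correct (to the same level of rigor as the paper) and follows essentially the same route: drop the deterministic bias correction, replace $W_i$ asymptotically by $k\overline{w}(x_i)/N$ using the law of large numbers on the normalizing sum, reduce $\Var\big[\tfrac1N\sum_i \overline{w}_i T_i\big]$ to $\tfrac1N\Var[\overline{w}_1 T_1]$ by appealing to the vanishing inter-sample covariances as in \citet{singh2003nearest}, and expand the single-sample variance of $\overline{w}(\ln C+\ln R^p-\ln\overline{w})$ into the three stated terms. The only difference is cosmetic: you discard the three pairwise covariance terms explicitly with a domination argument in $(\ln C)^2$, whereas the paper's expansion makes them cancel through its algebraic treatment of the squared expectation — your handling of that step is, if anything, the more transparent of the two.
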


\section{A Task-Agnostic Exploration Objective}
\label{sec:objective}

In this section, we define a learning objective for task-agnostic exploration, which is a fully unsupervised phase that potentially precedes a set of diverse goal-based RL phases.
First, we make a common regularity assumption on the class of the considered MDPs, which allows us to exclude the presence of unsafe behaviors or dangerous states.
\begin{restatable}[]{ass}{ergodicity}
	For any policy $\pi \in \Pi$, the corresponding Markov chain $P^\pi$ is ergodic.
\end{restatable}
Then, following a common thread in maximum-entropy exploration~\cite{hazan19maxent, tarbouriech2019active, mutti2019ideal}, and particularly~\cite{lee2019efficient}, which focuses on a finite-horizon setting as we do, we define the \emph{task-agnostic exploration} problem:
\begin{align}
    &\underset{\pi \in \Pi}{\text{maximize}}\;\; \mathcal{F}_{\text{TAE}}(\pi) = H \bigg( \frac{1}{T} \sum_{t = 1}^{T} d^\pi_t \bigg),
    \label{eq:tae_entropy}
\end{align}
where $ \bar{d}_T = \frac{1}{T} \sum_{t = 1}^{T} d^\pi_t $ is the \emph{average state distribution}. An optimal policy \wrt this objective favors a maximal coverage of the state space into the finite-horizon $T$, irrespective of the state-visitation order. Notably, the exploration horizon $T$ has not to be intended as a given trajectory length, but rather as a parameter of the unsupervised exploration phase which allows to tradeoff exploration quality (\ie state-space coverage) with exploration efficiency (\ie mixing properties).

As the thoughtful reader might realize, optimizing Objective~\eqref{eq:tae_entropy} is not an easy task. Known approaches would require either to estimate the transition model in order to obtain average state distributions~\cite{tarbouriech2019active, mutti2019ideal}, or to directly estimate these distributions through a density model~\cite{hazan19maxent, lee2019efficient}. In contrast to the literature, we turn to non-parametric entropy estimation without explicit state distributions modeling, deriving a more practical policy-search approach that we present in the following section.

\section{The Algorithm}
\label{sec:algorithm}

In this section, we present a model-free policy-search algorithm, \emph{Maximum Entropy POLicy optimization} (MEPOL), to deal with the task-agnostic exploration problem~\eqref{eq:tae_entropy} in continuous, high-dimensional domains. MEPOL searches for a policy that maximizes the performance index $\widehat{H}_k ( \bar{d}_T ( \vtheta ))$ within a parametric space of stochastic differentiable policies $\Pi_\Theta = \{ \pi_{\vtheta} : \vtheta \in \Theta \subseteq \mathbb{R}^q \}$. The performance index is given by the non-parametric entropy estimator~\eqref{eq:ons_estimator} where we replace $f$ with the average state distribution $\bar{d}_T (\cdot | \pi_{\vtheta} ) = \bar{d}_T (\vtheta )$. The approach combines ideas from two successful policy-search algorithms, TRPO~\cite{schulman2015trpo} and POIS~\cite{metelli2018pois}, as it is reported in the following paragraphs. Algorithm~\ref{alg:MEPOL} provides the pseudocode for MEPOL.
\begin{algorithm}[t]
    \caption{MEPOL}
    \label{alg:MEPOL}
    \begin{algorithmic}[H]
        \STATE \textbf{Input}: exploration horizon $T$, sample-size $N$, trust-region threshold $\delta$, learning rate $\alpha$, nearest neighbors $k$
        \STATE initialize $\vtheta$
        \FOR{epoch = $1, 2, \ldots, $ until convergence}
            \STATE draw a batch of $\lceil \frac{N}{T} \rceil$ trajectories of length $T$ with $\pi_{\vtheta}$
            \STATE build a dataset of particles $\mathcal{D}_\tau = \{ (\tau_i^t, s_i) \}_{i = 1}^N$
            \STATE $\vtheta' = \emph{IS-Optimizer}(\mathcal{D}_\tau, \vtheta) $
            \STATE $\vtheta \gets \vtheta'$
        \ENDFOR
        \STATE \textbf{Output}: task-agnostic exploration policy $\pi_{\vtheta}$ 
    \end{algorithmic}
    \vspace{-3pt}
    \algrule[0.75pt]
    \begin{flushleft}
        \vspace{-4pt}
        \normalsize{IS-Optimizer}
        \vspace{-9pt}
    \end{flushleft}
    \algrule[0.4pt]
    \vspace{-5pt}
    \begin{algorithmic}[H]
        \STATE \textbf{Input}: dataset of particles $\mathcal{D}_\tau$, sampling parameters $\vtheta$
        \STATE initialize $h = 0$ and $\vtheta_h = \vtheta$
        \WHILE{$D_{KL}( \bar{d}_T (\vtheta_0)  || \bar{d}_T (\vtheta_h) ) \leq \delta$}
            \STATE compute a gradient step:
            \STATE $ \quad \vtheta_{h + 1} = \vtheta_{h} + \alpha \nabla_{\vtheta_{h}} \widehat{H}_k \big( \bar{d}_T (\vtheta_{h}) | \bar{d}_T ( \vtheta_0 ) \big)$
            \STATE $h \gets h + 1$
        \ENDWHILE
        \STATE \textbf{Output}: parameters $\vtheta_h$ 
    \end{algorithmic}
\end{algorithm}

\paragraph{Trust-Region Entropy Maximization}
The algorithm is designed as a sequence of entropy index maximizations, called epochs, within a trust-region around the current policy $\pi_{\vtheta}$~\cite{schulman2015trpo}. First, we select an exploration horizon $T$ and an estimator parameter $k \in \mathbb{N}$. Then, at each epoch, a batch of trajectories of length $T$ is sampled from the environment with $\pi_{\vtheta}$, so as to take a total of $N$ samples. By considering each state encountered in these trajectories as an unweighted particle, we have $\mathcal{D} = \{ s_i \}_{i = 1}^{N} $ where $ s_i \sim \bar{d}_T ( \vtheta )$. Then, given a trust-region threshold $\delta$, we aim to solve the following optimization problem:
\begin{equation}
\begin{aligned}
    \underset{\vtheta' \in \Theta}{\text{maximize}}& \quad
    \widehat{H}_k \big( \bar{d}_{T} (\vtheta')\big) \\
    \text{subject to}& \quad D_{KL} \big( \bar{d}_T (\vtheta) \big|\big| \bar{d}_T (\vtheta') \big) \leq \delta.
    \label{eq:tr_maximization}
\end{aligned}
\end{equation}
The idea is to optimize Problem~\eqref{eq:tr_maximization} via Importance Sampling (IS)~\cite{owen2013monte}, in a fully off-policy manner partially inspired by~\cite{metelli2018pois}, exploiting the IW entropy estimator~\eqref{eq:offs_estimator} to calculate the objective and the KL estimator~\eqref{eq:kl_estimator} to compute the trust-region constraint. We detail the off-policy optimization in the following paragraph.

\paragraph{Importance Sampling Optimization}
We first expand the set of particles $\mathcal{D}$ by introducing $\mathcal{D}_\tau = \{ (\tau_i^t, s_i) \}_{i = 1}^N$, where $\tau_i^t = (s_i^0, \ldots, s_i^t = s_i)$ is the portion of the trajectory that leads to state $s_i$. In this way, for any policy $\pi_{\vtheta'}$, we can associate to each particle its normalized importance weight:
\begin{align*}
    &\overline{w}_i = \frac{Pr(\tau_i^t | \pi_{\vtheta'})}{Pr(\tau_i^t | \pi_{\vtheta})} = \prod_{z=0}^t \frac{\pi_{\vtheta'} (a_i^z | s_i^z)}{\pi_{\vtheta} (a_i^z | s_i^z)},
    &w_i = \frac{ \overline{w}_i }
    {\sum_{n = 0}^N \overline{w}_n}.
\end{align*}
Then, having set a constant learning rate $\alpha$ and the initial parameters $ \vtheta_0 = \vtheta $, we consider a gradient ascent optimization of the IW entropy estimator~\eqref{eq:offs_estimator},
\begin{equation}
    \vtheta_{h + 1} = \vtheta_{h} + \alpha \nabla_{\vtheta_{h}}
    \widehat{H}_k \big( \bar{d}_T (\vtheta_{h}) | \bar{d}_T ( \vtheta_0 ) \big),
    \label{eq:policy_update}
\end{equation}
 until the trust-region boundary is reached, \ie when it holds:
\begin{equation*}
    \widehat{D}_{KL} \big( \bar{d}_T (\vtheta_0) \big|\big| \bar{d}_T (\vtheta_{h + 1}) \big) > \delta.
\end{equation*}
The following theorem provides the expression for the gradient of the IW entropy estimator in Equation~\eqref{eq:policy_update}.
\begin{restatable}[]{thr}{estimatorGradient} \label{thr:estimator_gradient}
    Let $\pi_{\vtheta}$ be the current policy and $\pi_{\vtheta'}$ a target policy. The gradient of the IW estimator $\widehat{H}_k ( \bar{d}_T (\vtheta') | \bar{d}_T (\vtheta) )$ \wrt $\vtheta'$ is given by:
    \begin{align*}
        \nabla_{\vtheta'} \widehat{H}_k &( \bar{d}_T (\vtheta') | \bar{d}_T (\vtheta) ) = - \sum_{i = 0}^{N} \frac{  \nabla_{\vtheta'} W_i }{ k } \bigg( V_i^k + \ln \frac{ W_i }{ V_i^k } \bigg),
    \end{align*}
    where:
    \begin{align*}
        &\nabla_{\vtheta'} W_i = \sum_{j \in \mathcal{N}_i^k} w_j \times \bigg( \sum_{z = 0}^{t} \nabla_{\vtheta'} \ln \pi_{\vtheta'} (a_j^z | s_j^z) \\
        &\;\; - \frac{ \sum_{n = 1}^N \prod_{z=0}^t \frac{ \pi_{\vtheta'} ( a_n^z | s_n^z ) }{ \pi_{\vtheta} ( a_n^z | s_n^z ) } \sum_{z = 0}^{t} \nabla_{\vtheta'} \ln \pi_{\vtheta'} (a_n^z | s_n^z)  } { \sum_{n = 1}^N \prod_{z=0}^t \frac{ \pi_{\vtheta'} ( a_n^z | s_n^z ) }{ \pi_{\vtheta} ( a_n^z | s_n^z ) } } \bigg).
    \end{align*}
\end{restatable}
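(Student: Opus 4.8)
The plan is to differentiate the importance-weighted estimator~\eqref{eq:offs_estimator}, instantiated with sampling distribution $f = \bar{d}_T(\vtheta)$ and target distribution $f' = \bar{d}_T(\vtheta')$, directly \wrt the target parameters $\vtheta'$. The crucial observation is that, once a batch of particles $\mathcal{D}_\tau = \{(\tau_i^t, s_i)\}_{i=1}^N$ has been drawn with $\pi_{\vtheta}$, the neighbor sets $\mathcal{N}_i^k$ and the hyper-sphere volumes $V_i^k$ are fixed quantities, determined solely by the Euclidean geometry of the sampled states $\{s_i\}$, and therefore carry no dependence on $\vtheta'$. Consequently, the only part of~\eqref{eq:offs_estimator} that varies with $\vtheta'$ is $W_i = \sum_{j \in \mathcal{N}_i^k} w_j$, through the self-normalized importance weights $w_j$ (the additive constants $\ln k - \Psi(k)$ disappear). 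A first application of the product and chain rules to each summand $\frac{W_i}{k}\ln\frac{W_i}{V_i^k}$ reduces the problem to computing $\nabla_{\vtheta'} W_i = \sum_{j \in \mathcal{N}_i^k} \nabla_{\vtheta'} w_j$ and collecting the resulting terms.

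The core of the computation is thus $\nabla_{\vtheta'} w_j$, for $w_j = \overline{w}_j / \sum_{n=1}^N \overline{w}_n$ with $\overline{w}_j = \prod_{z=0}^{t} \pi_{\vtheta'}(a_j^z|s_j^z) / \pi_{\vtheta}(a_j^z|s_j^z)$. Applying the quotient rule and the log-derivative identity $\nabla_{\vtheta'} \overline{w}_n = \overline{w}_n \nabla_{\vtheta'} \ln \overline{w}_n$, one obtains
\begin{equation*}
    \nabla_{\vtheta'} w_j = w_j \bigg( \nabla_{\vtheta'} \ln \overline{w}_j - \frac{\sum_{n=1}^N \overline{w}_n\, \nabla_{\vtheta'} \ln \overline{w}_n}{\sum_{n=1}^N \overline{w}_n} \bigg).
\end{equation*}
Since the $\pi_{\vtheta}$ factors in each ratio do not depend on $\vtheta'$, we have $\nabla_{\vtheta'} \ln \overline{w}_n = \sum_{z=0}^{t} \nabla_{\vtheta'} \ln \pi_{\vtheta'}(a_n^z|s_n^z)$. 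Substituting this, re-expanding $\overline{w}_n$ to $\prod_{z=0}^t \pi_{\vtheta'}(a_n^z|s_n^z)/\pi_{\vtheta}(a_n^z|s_n^z)$ in the normalization term, and summing over $j \in \mathcal{N}_i^k$ yields exactly the claimed expression for $\nabla_{\vtheta'} W_i$, in which the subtracted term acts as a baseline shared by all particles. Plugging this back into the outer derivative of $-\sum_i \frac{W_i}{k}\ln\frac{W_i}{V_i^k}$, the product rule on $W_i\ln(W_i/V_i^k)$ produces, after collecting, the bracketed factor appearing in the statement, and keeping track of the overall sign completes the derivation.

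I expect the main obstacle to be the bookkeeping induced by the self-normalization of the importance weights: because every $w_j$ depends on \emph{all} the $\overline{w}_n$ in the batch, its gradient is not merely $w_j\,\nabla_{\vtheta'}\ln\overline{w}_j$ but carries the additional normalization term, and care is needed to propagate this term correctly first through the sum defining $W_i$ and then through the outer logarithm $\ln(W_i/V_i^k)$. The remaining manipulations — the log-derivative trick for the products of policy densities and the product rule on the outer sum — are routine once the normalization has been handled cleanly.
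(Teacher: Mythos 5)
Your proposal follows essentially the same route as the paper's proof: you hold the neighborhoods $\mathcal{N}_i^k$ and volumes $V_i^k$ fixed once the batch is drawn, differentiate the outer sum term by term, and reduce everything to $\nabla_{\vtheta'} w_j = w_j\big(\nabla_{\vtheta'}\ln\overline{w}_j - \sum_n \overline{w}_n \nabla_{\vtheta'}\ln\overline{w}_n / \sum_n \overline{w}_n\big)$ via the log-derivative trick and the self-normalization, which is exactly the paper's computation of $\nabla_{\vtheta'} W_i$. One detail worth noting: if you carry out the outer product/chain rule fully, the derivative of $\frac{W_i}{k}\ln\frac{W_i}{V_i^k}$ gives the bracket $\big(1 + \ln\frac{W_i}{V_i^k}\big)$ rather than $\big(V_i^k + \ln\frac{W_i}{V_i^k}\big)$, so the factor you ``collect'' to match the statement actually reflects a small algebraic slip that the paper's own derivation (and the theorem statement) contains.
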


\section{Empirical Analysis}
\label{sec:experiments}

In this section, we present a comprehensive empirical analysis, which is organized as follows:
\begin{enumerate} \compresslist
\item[\ref{sec:exp_tae})] We illustrate that MEPOL allows learning a maximum-entropy policy in a variety of continuous domains, outperforming the current state of the art (MaxEnt);
\item[\ref{sec:exp_mixing})] We illustrate how the exploration horizon $T$, over which the policy is optimized, maximally impacts the trade-off between state entropy and mixing time;
\item[\ref{sec:exp_goal})] We reveal the significant benefit of initializing an RL algorithm (TRPO) with a MEPOL policy to solve numerous challenging continuous control tasks.
\end{enumerate}
A thorough description of the experimental set-up, additional results, and visualizations are provided in Appendix~\ref{apx:experiments}.

\subsection{Task-Agnostic Exploration Learning}
\label{sec:exp_tae}

\begin{figure*}[t]
	\begin{subfigure}[t]{0.28\textwidth}
    \centering
        \includegraphics[valign=t]{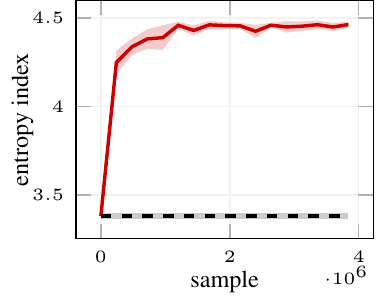}
        \vspace{-0.25cm}
        \caption{GridWorld}
        \label{fig:entropy_grid_world}
    \end{subfigure}
    \hspace{-0.2cm}
    \begin{subfigure}[t]{0.28\textwidth}
    \centering
        \includegraphics[valign=t]{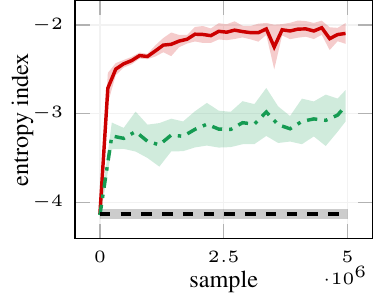}
        \vspace{-0.25cm}
        \caption{MountainCar}
        \label{fig:entropy_mountain_car}
    \end{subfigure}
    \begin{subfigure}[t]{0.42\textwidth}
    \centering
        \includegraphics[scale=0.175, valign=t]{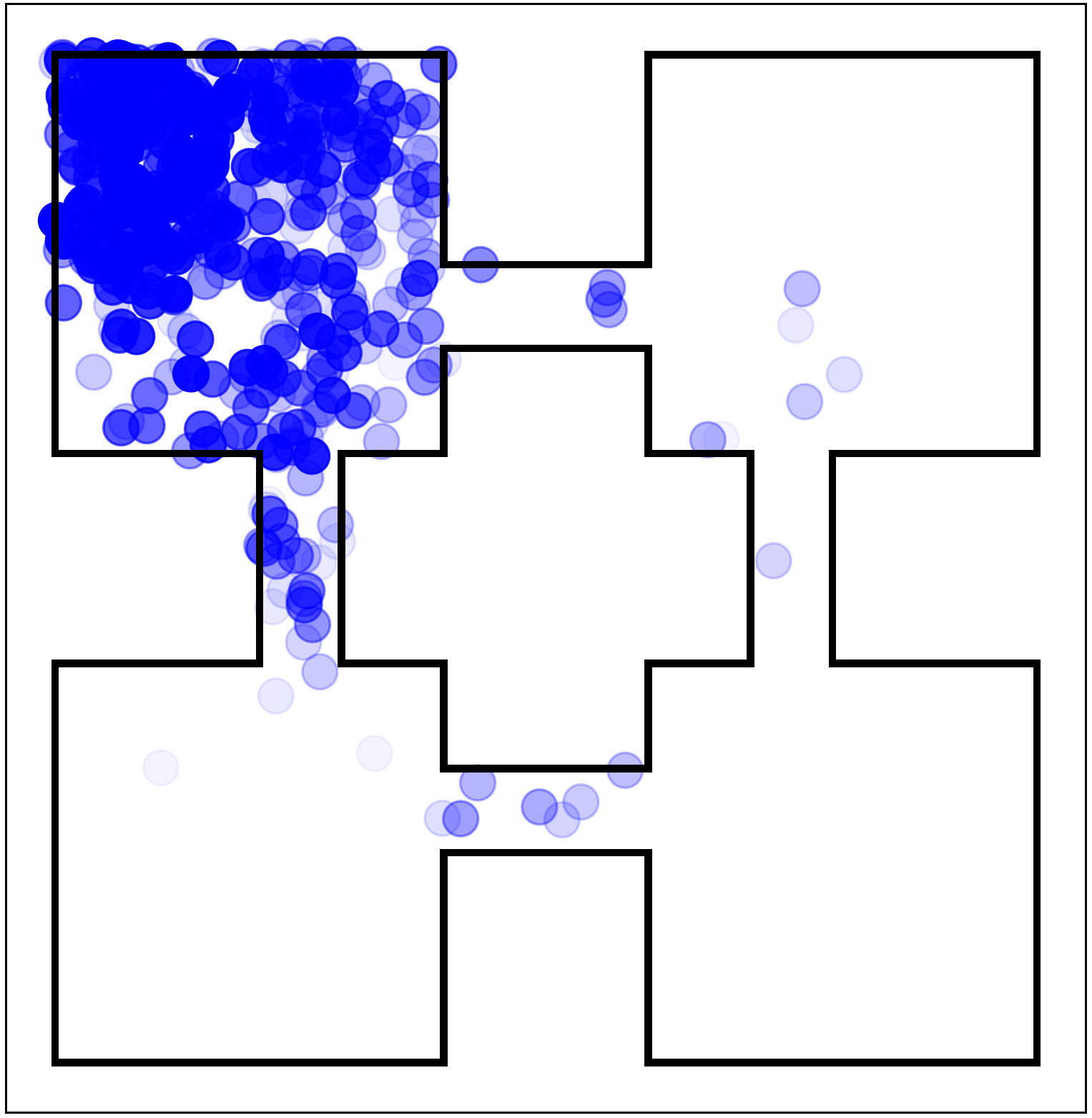}
        \hspace{0.3cm}
        \includegraphics[scale=0.175, valign=t]{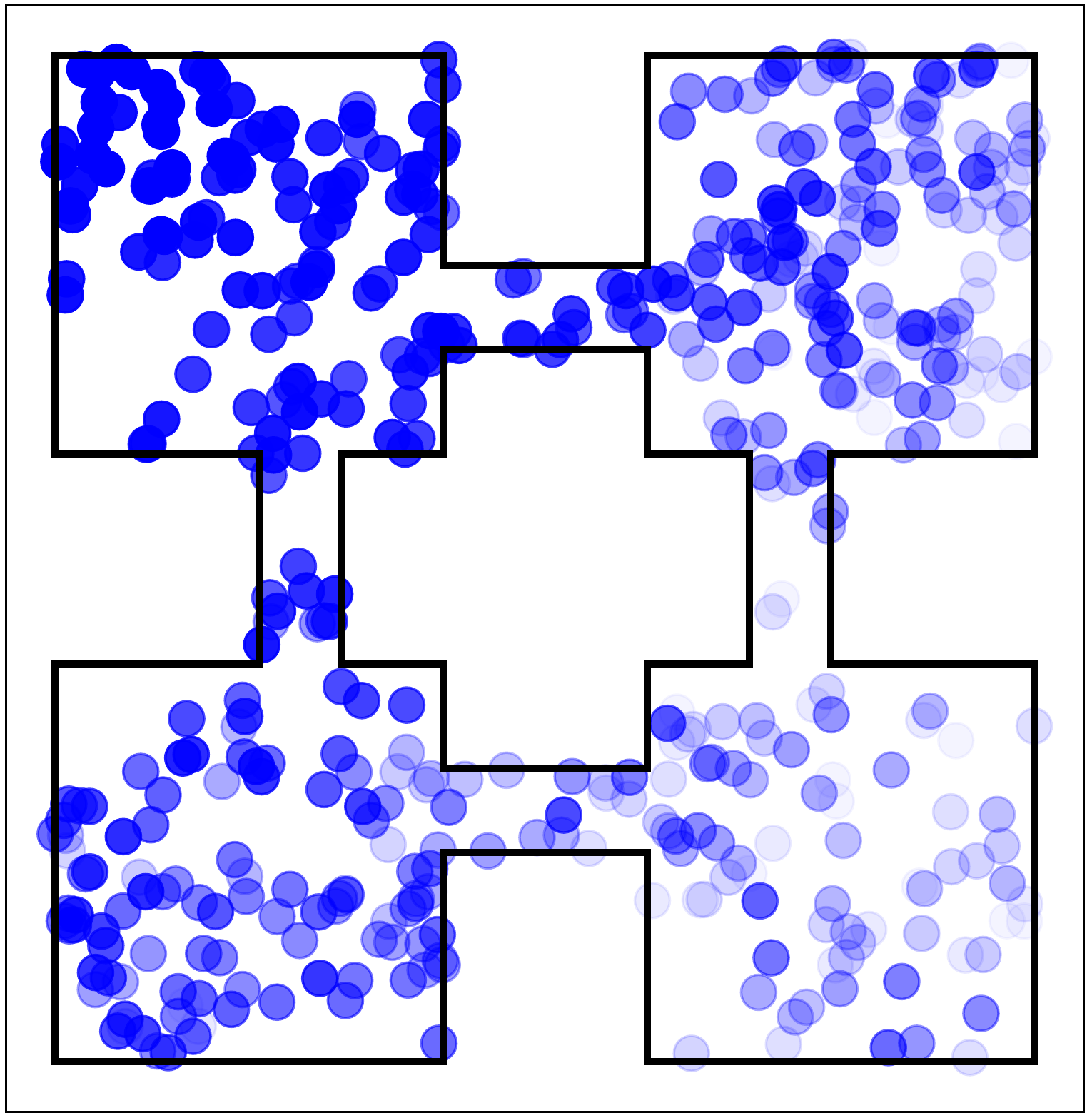}
        \vspace{0.02cm}
        \caption{GridWorld Visualization}
        \label{fig:illustrative_domains}
    \end{subfigure}

	\vspace{0.1cm}
	\begin{subfigure}[t]{0.33\textwidth}
    \centering
		\includegraphics[valign=t]{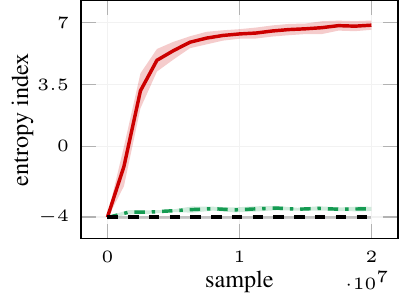}
		\vspace{-0.25cm}
        \caption{Ant}
        \label{fig:entropy_ant_torso}
    \end{subfigure}
    \begin{subfigure}[t]{0.32\textwidth}
    \centering
        \includegraphics[valign=t]{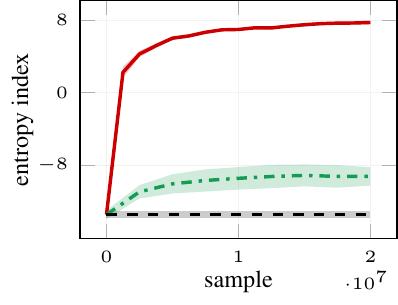}
        \vspace{-0.25cm}
        \caption{Humanoid}
        \label{fig:entropy_humanoid}
    \end{subfigure}
    \begin{subfigure}[t]{0.33\textwidth}
    \centering
        \includegraphics[valign=t]{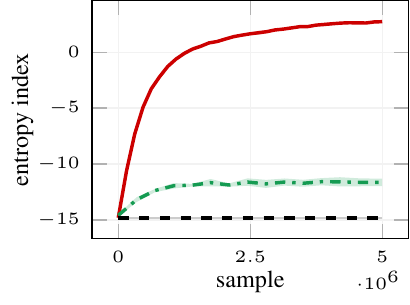}
        \vspace{-0.25cm}
        \caption{HandReach}
        \label{fig:entropy_hand_reach}
    \end{subfigure}

    \vspace{-0.05cm}
	\centering \includegraphics[scale=0.98]{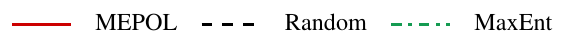}
	\vspace{-0.2cm}
    \caption{Comparison of the entropy index as a function of training samples achieved by MEPOL, MaxEnt, and a random policy. (95\% c.i. over 8 runs. MEPOL: $k$: 4 (c, d, e, f), 50 (b); $T$: 400 (c), 500 (d, e, f), 1200 (b). MaxEnt epochs: 20 (c), 30 (d, e, f)).}
    \label{fig:entropy_maximization}
\end{figure*}
In this section, we consider the ability of MEPOL to learn a task-agnostic exploration policy according to the proposed objective~\eqref{eq:tae_entropy}. Such a policy is evaluated in terms of its induced entropy value $\widehat{H}_k (\bar{d}_T (\vtheta))$, which we henceforth refer as \emph{entropy index}. We chose $k$ to optimize the performance of the estimator, albeit experiencing little to none sensitivity to this parameter (Appendix~\ref{apx:sensitivity}). In any considered domain, we picked a specific $T$ according to the time horizon we aimed to test in the subsequent goal-based setting (Section~\ref{sec:exp_goal}). This choice is not relevant in the policy optimization process, while we discuss how it affects the properties of the optimal policy in the next section. Note that, in all the experiments, we adopt a neural network to represent the parametric policy $\pi_{\vtheta}$ (see Appendix~\ref{apx:exp_setup_policy}). We compare our algorithm with MaxEnt~\cite{hazan19maxent}. To this end, we considered their practical implementation\footnote{\url{https://github.com/abbyvansoest/maxent/tree/master/humanoid}} of the algorithm to deal with continuous, non-discretized domains (see Appendix~\ref{apx:max_ent} for further details). Note that MaxEnt learns a mixture of policies rather than a single policy. To measure its entropy index, we stick with the original implementation by generating a batch as follows: for each step of a trajectory, we sample a policy from the mixture and we take an action with it. This is not our design choice, while we found that using the mixture in the usual way  leads to inferior performance anyway. We also investigated SMM~\cite{lee2019efficient} as a potential comparison. We do not report its results here for two reasons: we cannot achieve significant performance \wrt the random baseline, the difference with MaxEnt is merely in the implementation.

First, we evaluate task-agnostic exploration learning over two continuous illustrative domains: GridWorld (2D states, 2D actions) and MountainCar (2D, 1D). In these two domains, MEPOL successfully learns a policy that evenly covers the state space in a single batch of trajectories (state-visitation heatmaps are reported in Appendix~\ref{apx:heatmaps}), while showcasing minimal variance across different runs (Figure~\ref{fig:entropy_grid_world},~\ref{fig:entropy_mountain_car}). Notably, it significantly outperforms MaxEnt in the MountainCar domain.\footnote{We avoid the comparison in GridWorld, since the environment resulted particularly averse to MaxEnt.}
Additionally, In Figure~\ref{fig:illustrative_domains} we show how a batch of samples drawn with a random policy (left) compares to one drawn with an optimal policy (right, the color fades with the time step).
Then, we consider a set of continuous control, high-dimensional environments from the Mujoco suite~\cite{todorov2012mujoco}: Ant (29D, 8D), Humanoid (47D, 20D), HandReach (63D, 20D). While we learn a policy that maps full state representations to actions, we maximize the entropy index over a subset of the state space dimensions: 7D for Ant (3D position and 4D torso orientation), 24D for Humanoid (3D position, 4D body orientation, and all the joint angles), 24D for HandReach (full set of joint angles).
As we report in Figure~\ref{fig:entropy_ant_torso},~\ref{fig:entropy_humanoid},~\ref{fig:entropy_hand_reach}, MEPOL is able to learn policies with striking entropy values in all the environments. As a by-product, it unlocks several meaningful high-level skills during the process, such as jumping, rotating, navigation (Ant), crawling, standing up (Humanoid), and basic coordination (Humanoid, HandReach). Most importantly, the learning process is not negatively affected by the increasing number of dimensions, which is, instead, a well-known weakness of approaches based on explicit density estimation to compute the entropy~\cite{beirlant1997nonparametric}. This issue is documented by the poor results of MaxEnt, which struggles to match the performance of MEPOL in the considered domains, as it prematurely converges to a low-entropy mixture.

\paragraph{Scalability}
As we detail above, in the experiments over continuous control domains we do not maximize the entropy over the full state representation.
Note that this selection of features is not dictated by the inability of MEPOL to cope with even more dimensions, but to obtain reliable and visually interpretable behaviors (see Appendix~\ref{apx:tae_setup} for further details). 
To prove this point we conduct an additional experiment over a massively high-dimensional GirdWorld domain (200D, 200D). As we report in Figure~\ref{fig:200D_grid_world}, even in this setting MEPOL handily learns a policy  to maximize the entropy index.

\paragraph{On MaxEnt Results}
One might realize that the performance reported for MaxEnt appears to be much lower than the one presented in~\cite{hazan19maxent}. In this regard, some aspects need to be considered. First, their objective is different, as they focus on the entropy of discounted stationary distributions instead of $\bar{d}_T$. However, in the practical implementation, they consider undiscounted, finite-length trajectories as we do. Secondly, their results are computed over all samples collected during the learning process, while we measure the entropy over a single batch. Lastly, one could argue that an evaluation over the same measure ($k$-NN entropy estimate) that our method explicitly optimize is unfair. Nevertheless, even evaluating over the entropy of the 2D-discretized state space, which is the measure considered in~\cite{hazan19maxent}, leads to similar results (as reported in Figure~\ref{fig:discrete_entropy}).

\begin{figure*}[ht!]
	\begin{subfigure}[t]{0.65\textwidth}
	\vspace{-0.2cm}
	\centering
	 	\begin{tabular}[t]{lccccc}
        		\toprule
        		&MountainCar &Ant &Humanoid\\
        		\midrule
        		samples &$5\cdot10^6$ &$2\cdot10^7$  &$2\cdot10^7$ \\
        		\midrule
        		MEPOL &\textbf{4.31 $\pm$ 0.04} &\textbf{3.67 $\pm$ 0.05}  &\textbf{1.92 $\pm$ 0.08}\\
        		MaxEnt & 3.36 $\pm$ 0.4  &1.92 $\pm$ 0.05 &0.96 $\pm$ 0.06\\
        		Random & 1.98 $\pm$ 0.05  &1.86 $\pm$ 0.06 &0.84 $\pm$ 0.04\\
        		\bottomrule
    		\end{tabular}
    		\caption{Comparison of the entropy over the 2D-discretized states achieved by MEPOL, MaxEnt, and a random policy (95\% c.i. over 8 runs).}
    		\label{fig:discrete_entropy}
    \end{subfigure}
    \hspace{0.45cm}
    \begin{subfigure}[t]{0.3\textwidth}
		\includegraphics[valign=t]{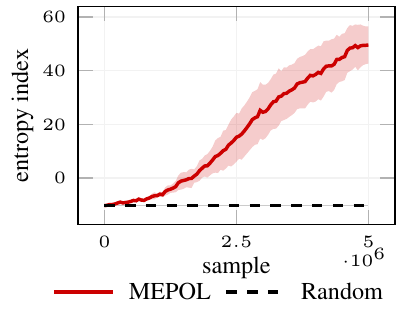}
 		\vspace{-0.25cm}
    		\caption{200D-GridWorld}
    		\label{fig:200D_grid_world}
    \end{subfigure}

    \vspace{0.4cm}
    \begin{subfigure}[t]{0.66\textwidth}
    \centering \includegraphics[valign=t]{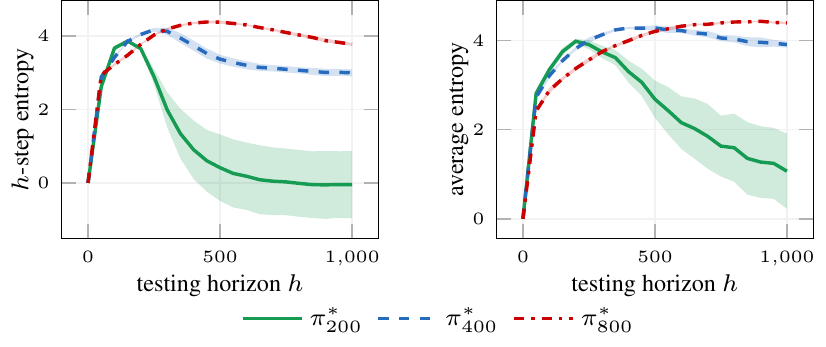}
    		\vspace{-0.25cm}
		\caption{GridWorld}
		\label{fig:tradeoff_grid}
	\end{subfigure}
 	\begin{subfigure}[t]{0.33\textwidth}
 		\centering \includegraphics[valign=t]{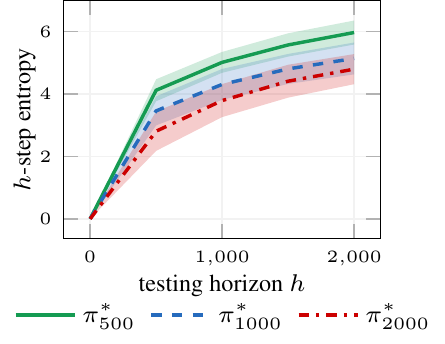}
 		\vspace{-0.25cm}
 		\caption{Humanoid}
 		\label{fig:tradeoff_humanoid}
 	\end{subfigure}
 	\vspace{-0.2cm}
\caption{Comparison of the entropy index over an extended (200D, 200D) GridWorld domain (b).
Comparison of the $h$-step entropy ($H (d^\pi_h)$) and average entropy ($H(\bar{d}_h)$) achieved by a set of policies trained over different horizons $T$ as a function of the testing horizon $h$ (c, d). (95\% c.i. over 8 runs).}
\end{figure*}

\subsection{Impact of the Exploration Horizon Parameter}
\label{sec:exp_mixing}

In this section, we discuss how choosing an exploration horizon $T$ affects the properties of the learned policy. First, it is useful to distinguish between a \emph{training horizon} $T$, which is an input parameter to MEPOL, and a \emph{testing horizon} $h$ on which the policy is evaluated. Especially, it is of particular interest to consider how an exploratory policy trained over $T$-steps fares in exploring the environment for a mismatching number of steps $h$. To this end, we carried out a set of experiments in the aforementioned GridWorld and Humanoid domains. We denote by $\pi_T^*$ a policy obtained by executing MEPOL with a training horizon $T$ and we consider the entropy of the $h$-step state distribution induced by $\pi_T^*$. Figure~\ref{fig:tradeoff_grid} (left), referring to the GridWorld experiment, shows that a policy trained over a shorter $T$ might hit a peak in the entropy measure earlier (fast mixing), but other policies achieve higher entropy values at their optimum (highly exploring).\footnote{The trade-off between entropy and mixing time has been substantiated for steady-state distributions in~\cite{mutti2019ideal}.} It is worth noting that the policy trained over $200$-steps becomes overzealous when the testing horizon extends to higher values, while derailing towards a poor $h$-step entropy. In such a short horizon, the learned policy cannot evenly cover the four rooms and it overfits over easy-to-reach locations. Unsurprisingly, also the average state entropy over $h$-steps ($\bar{d}_{h}$), which is the actual objective we aim to maximize in task-agnostic exploration, is negatively affected, as we report in Figure~\ref{fig:tradeoff_grid} (right).
This result points out the importance of properly choosing the training horizon in accordance with the downstream-task horizon the policy will eventually face. However, in other cases a policy learned over $T$-steps might gracefully generalize to longer horizons, as confirmed by the Humanoid experiment (Figure~\ref{fig:tradeoff_humanoid}). The environment is free of obstacles that can limit the agent's motion, so there is no incentive to overfit an exploration behavior over a shorter $T$.

\subsection{Goal-Based Reinforcement Learning}
\label{sec:exp_goal}

\begin{figure*}[t]
	\begin{subfigure}[t]{0.99\textwidth}
    \begin{center}
    		\hspace{0.5cm}
        \includegraphics[scale=0.21, valign=t]{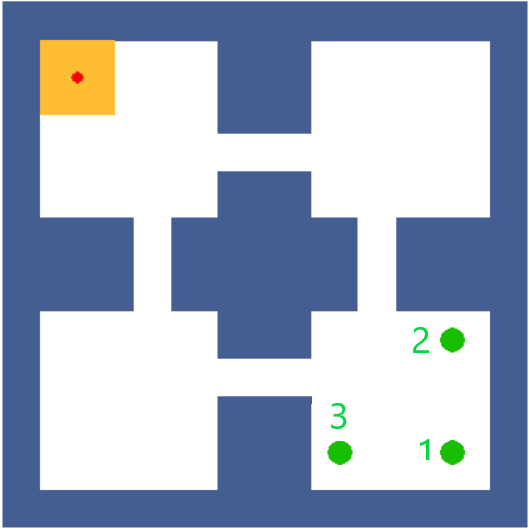}
        \hspace{0.5cm}
        \includegraphics[valign=t]{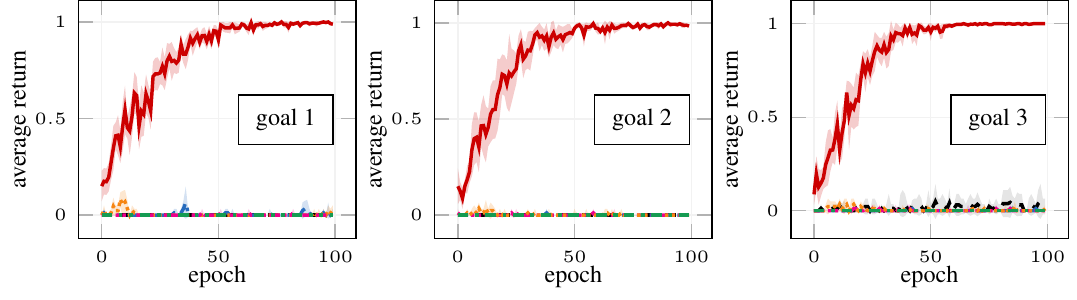}
        \vspace{-0.2cm}
		\caption{GridWorld Navigation}
		\label{fig:goal_rl_grid_world}
	\end{center}
	\end{subfigure}
	\vspace{0.2cm}

	\begin{subfigure}[b]{0.49\textwidth}
	\begin{center}
	    \includegraphics[valign=t]{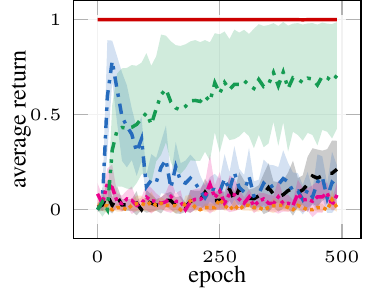}
	    \includegraphics[scale=0.56, valign=t]{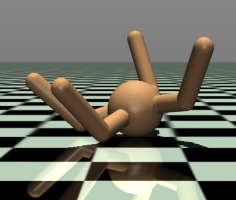}
	    \vspace{-0.3cm}
	    \caption{Ant Escape}
	    \label{fig:goal_rl_ant_escape}
	\end{center}
	\end{subfigure}
	\begin{subfigure}[b]{0.49\textwidth}
	\begin{center}
	    \includegraphics[valign=t]{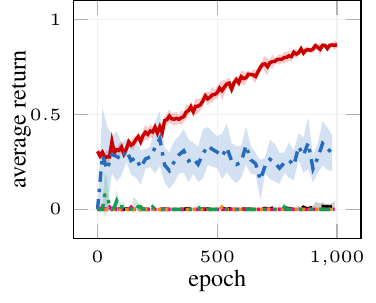}
	    \hspace{-0.2cm}
	    \includegraphics[scale=0.57, valign=t]{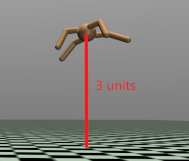}
	    \vspace{-0.3cm}
	    \caption{Ant Jump}
	    \label{fig:goal_rl_ant_jump}
	\end{center}
	\end{subfigure}
	\vspace{0.2cm}

	\begin{subfigure}[b]{0.49\textwidth}
	\begin{center}
	    \includegraphics[valign=t]{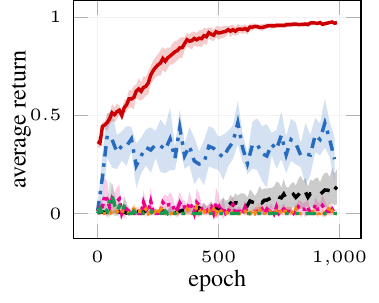}
	    \hspace{-0.2cm}
	    \includegraphics[scale=0.46, valign=t]{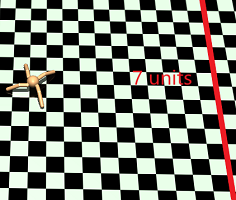}
	    \vspace{-0.3cm}
	    \caption{Ant Navigate}
	    \label{fig:goal_rl_ant_navigate}
	\end{center}
	\end{subfigure}
	\begin{subfigure}[b]{0.49\textwidth}
	\begin{center}
	    \includegraphics[valign=t]{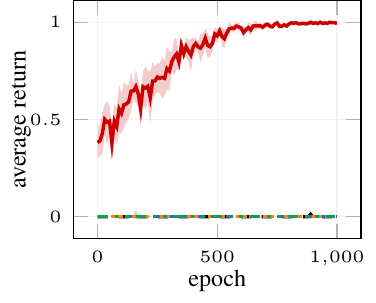}
	    \hspace{-0.2cm}
	    \includegraphics[scale=0.565, valign=t]{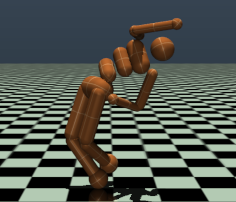}
	    \vspace{-0.3cm}
	    \caption{Humanoid Up}
		\label{fig:goal_rl_humanoid_up}
	\end{center}
	\end{subfigure}
	\vspace{0.2cm}

\centering \includegraphics[]{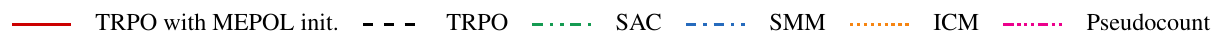}
\vspace{-0.2cm}
\caption{Comparison of the average return as a function of learning epochs achieved by TRPO with MEPOL initialization, TRPO, SAC, SMM, ICM, and Pseudocount over a set of sparse-reward RL tasks. For each task, we report a visual representation and learning curves. (95\% c.i. over 8 runs).}
\label{fig:goal_rl}
\end{figure*}
In this section, we illustrate how a learning agent can benefit from an exploration policy learned by MEPOL when dealing with a variety of goal-based RL tasks.
Especially, we compare the performance achieved by TRPO~\cite{schulman2015trpo} initialized with a MEPOL policy (the one we learned in Section~\ref{sec:exp_tae}) \wrt a set of significant baselines that learn from scratch, \ie starting from a randomly initialized policy. These baselines are: TRPO, SAC~\cite{haarnoja2018soft}, which promotes exploration over actions, SMM~\cite{lee2019efficient}, which has an intrinsic reward related to the state-space entropy, ICM~\cite{pathak2017curiosity}, which favors exploration by fostering prediction errors, and Pseudocount~\cite{bellemare2016unifying}, which assigns high rewards to rarely visited states.
The algorithms are evaluated in terms of average return on a series of sparse-reward RL tasks defined over the environments we considered in the previous sections.

Note that we purposefully chose an algorithm without a smart exploration mechanism, \ie TRPO, to employ the MEPOL initialization. In this way we can clearly show the merits of the initial policy in providing the necessary exploration. However, the MEPOL initialization can be combined with any other RL algorithm, potentially improving the reported performance.
In view of previous results in task-agnostic exploration learning (Section~\ref{sec:exp_tae}), where MaxEnt is plainly dominated by our approach, we do not compare with TRPO initialized with a MaxEnt policy, as it would not be a challenging baseline in this setting.

In GridWorld, we test three navigation tasks with different goal locations (see Figure~\ref{fig:goal_rl_grid_world}). The reward is 1 in the states having Euclidean distance to the goal lower than 0.1.
For the Ant environment, we define three, incrementally challenging, tasks: Escape, Jump, Navigate.
In the first, the Ant starts from an upside-down position and it receives a reward of 1 whenever it rotates to a straight position (Figure~\ref{fig:goal_rl_ant_escape}).
In Jump, the agent gets a reward of 1 whenever it jumps higher than three units from the ground (Figure~\ref{fig:goal_rl_ant_jump}).
In Navigate, the reward is 1 in all the states further than 7 units from the initial location (Figure~\ref{fig:goal_rl_ant_navigate}).
Finally, in Humanoid Up, the agent initially lies on the ground and it receives a reward of 1 when it is able to stand-up (Figure~\ref{fig:goal_rl_humanoid_up}).
In all the considered tasks, the reward is zero anywhere except for the goal states, an episode ends when the goal is reached.

As we show in Figure~\ref{fig:goal_rl}, the MEPOL initialization leads to a striking performance across the board, while the tasks resulted extremely hard to learn from scratch.
In some cases (Figure~\ref{fig:goal_rl_ant_escape}), MEPOL allows for zero-shot policy optimization, as the optimal behavior has been already learned in the unsupervised exploration stage. In other tasks (\eg Figure~\ref{fig:goal_rl_grid_world}), the MEPOL-initialized policy has lower return, but it permits for lighting fast adaptation \wrt random initialization.
Note that, to match the tasks' higher-level of abstraction, in Ant Navigate and Humanoid Up we employed MEPOL initialization learned by maximizing the entropy over mere spatial coordinates (x-y in Ant, x-y-z in Humanoid). However, also the exact policies learned in Section~\ref{sec:exp_tae} fares remarkably well in those scenarios (see Appendix~\ref{apx:higher_lower_level}), albeit experiencing slower convergence.

\section{Discussion and Conclusions}
\label{sec:conclusions}

In this paper, we addressed task-agnostic exploration in environments with non-existent rewards by pursuing state entropy maximization. We presented a practical policy-search algorithm, MEPOL, to learn an optimal task-agnostic exploration policy in continuous, high-dimensional domains. We empirically showed that MEPOL performs outstandingly in terms of state entropy maximization, and that the learned policy paves the way for solving several reward-based tasks downstream.

\paragraph{Extensions and Future Directions} 
First, we note that the results reported for the goal-based setting (Section~\ref{sec:exp_goal}) can be easily extended, either considering a wider range of tasks or combining the MEPOL initialization with a variety of RL algorithms (other than TRPO). In principle, any algorithm can benefit from task-agnostic exploration, especially when dealing with sparse-reward tasks. 
Secondly,  while we solely focused on finite-horizon exploration, it is straightforward to adapt the presented approach to the discounted case: We could simply generate a batch of trajectories with a probability $1 - \gamma$ to end at any step instead of stopping at step $T$, and then keep everything else as in Algorithm~\ref{alg:MEPOL}. This could be beneficial when dealing with discounted tasks downstream. 
Future work might address an adaptive control over the exploration horizon $T$, so to induce a curriculum of exploration problems, starting from an easier one (with a short $T$) and going forward to more challenging problems (longer $T$).
Promising future directions also include learning task-agnostic exploration across a collection of environments, and contemplating the use of a non-parametric state entropy regularization in reward-based policy optimization.

\paragraph{Other Remarks}
It is worth mentioning that the choice of a proper metric for the $k$-NN computation might significantly impact the final performance. In our continuous control experiments, we were able to get outstanding results with a simple Euclidean metric. However, different domains, such as learning from images, might require the definition of a more thoughtful metric space in order to get reliable entropy estimates. In this regard, some recent works~\cite[\eg][]{misra2020kinematic} provide a blueprint to learn state embeddings in reward-free rich-observation problems.
Another theme that is worth exploring to get even better performance over future tasks is sample reuse. In MEPOL, the samples collected during task-agnostic training are discarded, while only the resulting policy is retained. An orthogonal line of research focuses on the problem of collecting a meaningful batch of samples in a reward-free setting~\cite{jin2020reward}, while discarding sampling policies. Surely a combination of the two objectives will be necessary to develop truly efficient methods for task-agnostic exploration, but we believe that these two lines of work still require significant individual advances before being combined into a unique, broadly-applicable approach.

To conclude, we hope that this work can shed some light on the great potential of state entropy maximization approaches to perform task-agnostic exploration.

\bibliography{biblio}
\bibliographystyle{aaai.bst}

\clearpage
\onecolumn
\appendix

\section{Related Work}
\label{apx:related_work}

Our work falls into the category of unsupervised reinforcement learning. Especially, we address the problem of task-agnostic exploration, \ie how to learn an unsupervised exploration policy that generalizes towards a wide range of tasks. This line of work relates to the recent reward-free exploration framework, to intrinsically-motivated learning, state-reaching approaches, and, to some extent, meta-RL. In this section, we provide a non-exhaustive review of previous work in these areas, and we further discuss those that relates the most with ours.

The reward-free exploration framework~\cite{jin2020reward}, which has received notable attention lately~\cite[\eg][]{tarbouriech2020active, zhang2020task, kaufmann2020adaptive}, consider a very similar setting \wrt the one we address in this work, but a mostly orthogonal objective. While they strive to collect a meaningful batch of samples in the reward-free phase, possibly disregarding the sampling strategy, our focus is precisely on getting an effective sampling policy out of the task-agnostic phase, and we do not make use of the training samples.

In the context of (supervised) reinforcement learning, many works have drawn inspiration from intrinsic motivation~\cite{chentanez2005intrinsically, oudeyer2007intrinsic} to design \emph{exploration bonuses} that help the agent overcoming the exploration barrier in sparsely defined tasks.
Some of them, initially~\cite{schmidhuber1991possibility} and more recently~\cite{pathak2017curiosity, burda2018large}, promotes exploration by fostering prediction errors of an environment model, which is concurrently improved during the learning process.
Another approach~\cite[\eg][]{lopes2012exploration, houthooft2016vime} considers exploration strategies that maximizes the information gain, rather than prediction errors, on the agent's belief about the environment.
Other works devise intrinsic rewards that are somewhat proportional to the novelty of a state, so that the agent is constantly pushed towards new portions of the state space. Notable instances are count-based methods~\cite[\eg][]{bellemare2016unifying, tang2017exploration}, and random distillation~\cite{burda2018exploration}.
Lately, \citet{lee2019efficient} propose an exploration bonus that is explicitly related to the state distribution entropy, while they present an interesting unifying perspective of other intrinsic bonuses in the view of entropy maximization.

In the intrinsic motivation literature, other works tackle the problem of learning a set of useful skills in an unsupervised setting~\cite{gregor2017variational, achiam2018variational, eysenbach2018diversity}. The common thread is to combine variational inference and intrinsic motivation to maximize some information theoretic measure, which is usually closely related to \emph{empowerment}~\cite{salge2014empowerment, mohamed2015variational}, \ie the ability of the agent to control its own environment. In~\cite{gregor2017variational, achiam2018variational}, the proposed methods learn a set of diverse skills by maximizing the mutual information between the skills and their terminations states. \citet{eysenbach2018diversity} consider a mutual information objective as well, but computed over all the states visited by the learned skills, instead of just termination states.

Another relevant line of research focus on \emph{state-reaching objectives}. In tabular settings, \citet{lim2012autonomous} cast the problem in its most comprehensive form, proposing a principled method to incrementally solve any possible state-reaching task, from the easiest to the hardest. In~\cite{gajane2019autonomous}, the same method is extended to non-stationary environments. Other works, originally~\cite{bonarini2006incremental, bonarini2006self} and lately~\cite{ecoffet2019go}, provide algorithms to address a relaxed version of the state-reaching problem. In their case, the goal is to learn reaching policies to return to some sort of promising states (instead of all the states), then to seek for novel states from there on. In complex domains, \citet{pong2019skew} consider learning maximum-entropy goal generation to provide targets to the state-reaching component. Notably, they relates this procedure to maximum-entropy exploration.

\emph{Meta-learning}~\cite{schmidhuber1987meta} has been successfully applied to address generalization in RL~\citep[\eg][]{finn2017maml}. In this set-up, the agent faces numerous tasks in order to meta-learn a general model, which, then, can be quickly adapted to solve unseen tasks in a few shots. These methods generally relies on reward functions to train the meta-model. Instead, a recent work~\cite{gupta2018unsupervised} considers an unsupervised meta-RL set-up, which has some connections with our work. However, their focus is directed on learning fast adaptation rather than exploration, which is delegated to a method~\cite{eysenbach2018diversity} we already mentioned.

\paragraph{Maximum-Entropy Exploration} As reported in Section~\ref{sec:introduction}, the work that is more relevant to the context of this paper is the one on maximum-entropy exploration~\cite{hazan19maxent, tarbouriech2019active, mutti2019ideal, lee2019efficient}. While \cite{tarbouriech2019active, mutti2019ideal} only focus on asymptotic distributions and tabular settings, the work in~\cite{hazan19maxent, lee2019efficient} consider finite-horizon distributions and continuous domains as we do. Thus, it is worth reiterating the main differences between the proposed algorithm (MEPOL) and previous approaches (MaxEnt~\cite{hazan19maxent} and SMM~\cite{lee2019efficient}).
First, MEPOL learns a single exploration policy maximizing the entropy objective, MaxEnt and SMM learn a mixture of policies that collectively maximizes the entropy.
Secondly, MaxEnt and SMM relies on state density modeling to estimate the entropy,\footnote{Note that entropy estimation on top of density estimation has well-known shortcomings~\cite{beirlant1997nonparametric}.} MEPOL does not have to learn any explicit model as it performs non-parametric entropy estimation.
Lastly, MEPOL does not require intrinsic rewards, in contrast to MaxEnt and SMM that optimize intrinsic reward functions.\footnote{These are pseudo-reward functions actually~\cite{lee2019efficient}, as they depend on the current policy.}
As a side note, the native objective of SMM is to match a given target state distribution, which reduces to entropy maximization when the target is uniform. A relevant state-distribution matching approach, albeit recasted in an imitation learning set-up, has been also developed in~\cite{ghasemipour2019divergence}.

\section{Proofs}
\label{apx:proof}

\estimatorBias*

\begin{proof}
    The proof follow the sketch reported in~\citep[Section 4.1]{ajgl2011differential}. 
    First, We consider the estimator $\widehat{G}_k (f' | f) = \widehat{H}_k (f' | f) - \ln k + \Psi (k)$, that is:  
    \begin{equation}
        \widehat{G}_k (f' | f) = \sum_{i = 1}^N \frac{W_i}{k} \ln \frac{V_i^k}{W_i}.
        \label{eq:biased_estimator}
    \end{equation}
    By considering its expectation \wrt the sampling distribution we get:
    \begin{equation*}
        \EV_{x \sim f} [ \widehat{G}_k (f'|f) ] =
        \EV_{x \sim f} \bigg[ \sum_{i = 1}^N \frac{W_i}{k} \ln 
        \frac{1}{W_i}
        \frac{ R_i^p \uppi^{\nicefrac{p}{2}}}
        { \Gamma (\frac{p}{2} + 1)} \bigg],
    \end{equation*}
    where, for the sake of clarity, we will replace each logarithmic term as:
    \begin{equation}
        T_i = \ln \frac{1}{\sum_{j \in \mathcal{N}_i^k} w_j} \frac{ R_i^p \uppi^{\nicefrac{p}{2}}}{ \Gamma (\frac{p}{2} + 1)}.
        \label{eq:logarithmic_terms}
    \end{equation}
    Since we are interested in the asymptotic mean, we can notice that for $N \to \infty$ we have $\nicefrac{W_i}{k} \to \nicefrac{\overline{w}_i}{N}$, where $\overline{w}_i = \nicefrac{f'(x)}{f(x)}$ are the unnormalized importance weights~\citep[Section 4.1]{ajgl2011differential}. Thus, for $N \to \infty$, we can see that:
    \begin{equation*}
        \EV_{x \sim f} \big[\widehat{G}_k (f'|f) \big] =
        \EV_{x \sim f} \bigg[ \sum_{i = 1}^N \frac{\overline{w}_i}{N} T_i \bigg] =  
        \EV_{x \sim f'} \bigg[\frac{1}{N} \sum_{i = 1}^N T_i \bigg],
    \end{equation*}
    where the random variables $T_1, T_2, \ldots, T_N$ are identically distributed, so that:
    \begin{equation*}
        \EV_{x \sim f} \big[\widehat{G}_k (f'|f) \big] = 
        \EV_{x \sim f'} \big[ T_1 \big].
    \end{equation*}
    Thus, to compute the expectation, we have to characterize the following probability for any real number $r$ and any $x \sim f'$:
    \begin{equation*}
        Pr \big[ T_1 > r | X_1 = x \big] = Pr \big[ R_1 > \rho_r | X_1 = x \big],
    \end{equation*}
    where we have:
    \begin{equation*}
        \rho_r = \bigg[ \frac{W_i \cdot  \Gamma(\frac{p}{2} + 1) \cdot e^r}
        {\uppi^{\nicefrac{p}{2}} } \bigg]^{\frac{1}{p}}.
    \end{equation*}
    We can rewrite the probability as a binomial:
    \begin{equation*}
        Pr \big[ R_1 > \rho_r | X_1 = x \big] =
        \sum_{i = 0}^{k - 1} \binom{N - 1}{i} \big[ Pr( S_{\rho_r, x} ) \big ]^i \big[ 1 - Pr( S_{\rho_r, x} ) \big]^{N - 1 - i},
    \end{equation*}
    where $P( S_{\rho_r, x} )$ is the probability of $x$ lying into the sphere of radius $\rho_r$, denoted as $S_{\rho_r, x}$. Then, we employ the Poisson Approximation~\citep{hodges1960poisson} to this binomial distribution, reducing it to a poisson distribution having parameter:
    \begin{align*}
        \lim_{N \to \infty} \big[ N Pr(S_{\rho_r, x}) \big]
        =& \lim_{N \to \infty} \bigg[ N f(x) \frac{\uppi^{\nicefrac{p}{2}} \cdot \rho_r^p}{ \Gamma(\frac{p}{2} + 1) } \bigg] \\
        =& \lim_{N \to \infty} \big[ N W_i f(x) e^r \big]
        = \overline{w}_i f(x) k e^r = f'(x) k e^r.
    \end{align*}
    Therefore, we get:
    \begin{equation*}
        \lim_{N \to \infty} Pr \big[ T_1 > r | X_1 = x \big]
        = \sum_{i = 0}^{k - 1} \frac{ \big[ k f'(x) e^r\big]^i }{ i! } e^{- k f'(x) e^r} = Pr \big[ T_x > r \big],
    \end{equation*}
    such that the random variable $T_x$ has the pdf:
    \begin{equation*}
        h_{T_x} (y) = \frac{ \big[ k f'(x) e^r\big]^k }{ (k - 1)! } e^{- k f'(x) e^r}, -\infty < y < \infty.
    \end{equation*}
    Finally, we can compute the expectation following the same steps reported in~\citep[Theorem 8]{singh2003nearest}:
    \begin{align*}
        \lim_{N \to \infty} \EV \big[ T_1 | X_1 = x \big] &= \int_{-\infty}^{\infty} y \frac{ \big[ k f'(x) e^r\big]^k }{ (k - 1)! } e^{- k f'(x) e^r} \de y \\
        &= \int_{0}^{\infty} \big[ \ln z - \ln k - \ln f'(x) \big] \frac{ \big[ z^{k - 1} \big] }{ (k - 1)! } e^{- z}
        \de z \\
        &= \frac{1}{\Gamma (k) } \int_{0}^{\infty} \big[ \ln(z) z^{k - 1} e^{-z} \big] \de z - \ln k - \ln f'(x) \\
        &= \Psi (k) - \ln k - \ln f'(x),
    \end{align*}
    which for a generic $x$ it yields:
    \begin{equation*}
        \lim_{N \to \infty} \EV_{x \sim f'} \big[  T_1 \big] = H(f') - \ln k + \Psi (k) = \lim_{N \to \infty} \EV_{x \sim f} \big[ \widehat{G}_k (f' | f) \big].
    \end{equation*}
\end{proof}

\estimatorVariance*

\begin{proof}
    We consider the limit of the variance of $\widehat{H}_k (f'| f)$, we have:
    \begin{align*}
        \lim_{N \to \infty} \Var_{x \sim f} \big[ \widehat{H}_k (f'| f) \big] 
        &= \lim_{N \to \infty} \Var_{x \sim f} \big[ \widehat{G}_k (f'| f) \big] 
        = \lim_{N \to \infty} \Var_{x \sim f} \bigg[ \sum_{i = 1}^N \frac{W_i}{k} T_i \bigg]
        = \lim_{N \to \infty} \Var_{x \sim f} \bigg[  \frac{1}{N} \sum_{i = 1}^N \overline{w}_i T_i \bigg],
    \end{align*}
    where $\widehat{G}_k (f'| f)$ is the estimator without the bias correcting term~\eqref{eq:biased_estimator}, and $T_i$ are the logarithmic terms~\eqref{eq:logarithmic_terms}.
    Then, since the distribution of the random vector $\big(\overline{w}_1 T_1, \overline{w}_2 T_2, \ldots, \overline{w}_N T_N \big)$ is the same as any permutation of it~\citep{singh2003nearest}:
    \begin{equation*}
        \Var_{x \sim f} \bigg[  \frac{1}{N} \sum_{i = 1}^N \overline{w}_i T_i \bigg] = \frac{\Var_{x \sim f} \big[ \overline{w}_1 T_1\big]}{N} + \frac{N (N - 1)}{N^2} \Cov \big( \overline{w}_1 T_1, \overline{w}_2 T_2 \big).
    \end{equation*}
    Assuming that, for $N \to \infty$, the term $\Cov ( \overline{w}_1 T_1, \overline{w}_2 T_2 ) \to 0$ as its non-IW counterpart~\citep[Theorem 11]{singh2003nearest}, we are interested on the first term $\Var_{x \sim f} \big[ \overline{w}_1 T_1\big]$. Especially, we can derive:
    \begin{align*}
        \Var_{x \sim f} \big[ \overline{w}_1 T_1 \big]
        &= \Var_{x \sim f} \bigg[ \overline{w}_1 \ln 
        \frac{1}{\overline{w}_1} \frac{N}{k} \frac{R_1^p \uppi^{\nicefrac{p}{2}}}{ \Gamma (\frac{p}{2} + 1)} \bigg] = \Var_{x \sim f} \bigg[ - \overline{w}_1 \ln \overline{w}_1 +  \overline{w}_1 \ln R_1^p + \overline{w}_1 \ln \frac{N}{k} 
        \frac{ \uppi^{\nicefrac{p}{2}} }{ \Gamma (\frac{p}{2} + 1)} \bigg],
    \end{align*}
    where in the following, we will substitute $C = \frac{N \uppi^{\nicefrac{p}{2}}}{k \Gamma (\nicefrac{p}{2} + 1)}$.
    Then, we can write the second momentum as:
    \begin{align*}
        \EV_{x \sim f} \bigg[ \big( \overline{w}_1 T_1 \big)^2 \bigg]
        &= \EV_{x \sim f} \bigg[ 
        \big( \overline{w}_1 \ln \overline{w}_1 \big)^2 
        + \big( \overline{w}_1 \ln R_1^p \big)^2 
        + \big( \overline{w}_1 \ln C \big)^2 \\
        &\qquad\qquad 
        - 2 \overline{w}_1^2 \ln \overline{w}_1 \ln R_1^p 
        - 2 \overline{w}_1^2 \ln \overline{w}_1 \ln C
        + 2 \overline{w}_1^2 \ln R_1^p \ln C
        \bigg],
    \end{align*}
    while the squared expected value is:
    \begin{align*}
        \bigg( \EV_{x \sim f} \big[ \overline{w}_1 T_1 \big] \bigg)^2
        &= \bigg( 
        - \EV_{x \sim f} \big[ \overline{w}_1 \ln \overline{w}_1 \big]
        + \EV_{x \sim f} \big[ \overline{w}_1 \ln R_1^p \big]
        + \EV_{x \sim f} \big[ \overline{w}_1 \ln C \big] 
        \bigg)^2 \\
        &= 
        \bigg( \EV_{x \sim f} \big[ \overline{w}_1 \ln \overline{w}_1 \big] \bigg)^2
        + \bigg( \EV_{x \sim f} \big[ \overline{w}_1 R_1^p \big] \bigg)^2 + \bigg( \EV_{x \sim f} \big[ \overline{w}_1 \ln C \big] \bigg)^2 \\
        &\qquad
        - 2 \EV_{x \sim f} \bigg[ \overline{w}_1^2 \ln \overline{w}_1 \ln R_1^p \bigg] 
        - 2 \EV_{x \sim f} \bigg[ \overline{w}_1^2 \ln \overline{w}_1 \ln C \bigg]
        + 2 \EV_{x \sim f} \bigg[ \overline{w}_1^2 \ln C \ln R_1^p \bigg].
    \end{align*}
    Thus, we have:
    \begin{align*}
        \Var_{x \sim f} \big[ \overline{w}_1 T_1 \big]
        &= 
        \EV_{x \sim f} \bigg[ \big( \overline{w}_1 T_1 \big)^2 \bigg]
        - \bigg( \EV_{x \sim f} \big[ \overline{w}_1 T_1 \big] \bigg)^2 \\
        &= \Var_{x \sim f} \bigg[ \overline{w}_1 \ln \overline{w}_1 \bigg]
        + \Var_{x \sim f} \bigg[ \overline{w}_1 \ln R_1^p  \bigg]
        + \Var_{x \sim f} \bigg[ \overline{w}_1 \ln C \bigg] \\
        &= \Var_{x \sim f} \bigg[ \overline{w}_1 \ln \overline{w}_1 \bigg]
        + \Var_{x \sim f} \bigg[ \overline{w}_1 \ln R_1^p  \bigg]
        + \big(\ln C \big)^2 \Var_{x \sim f} \bigg[ \overline{w}_1 \bigg].
    \end{align*}
    Summing it up, we can write the asymptotic order of the variance as:
    \begin{align*}
        &\lim_{N \to \infty} \Var_{x \sim f} \big[ \widehat{H} (f'| f) \big] = 
        \frac{ \Var_{x \sim f} \big[ \overline{w} (x) \ln \overline{w} (x) \big]
        + \Var_{x \sim f} \big[ \overline{w} (x) \ln R (x)^p  \big]
        + \big( \ln C \big)^2 \Var_{x \sim f} \big[ \overline{w} (x) \big] }{N}
    \end{align*}
    
\end{proof}

\estimatorGradient*

\begin{proof}

We consider the IW entropy estimator~\eqref{eq:offs_estimator}, that is
\begin{equation}
    \widehat{H}_k ( \bar{d}_T (\vtheta') | \bar{d}_T (\vtheta) ) = - \sum_{i = 1}^N \frac{W_i}{k} \ln \frac{W_i}{V_i^k} + \ln k - \Psi (k),
    \label{eq:iw_estimator_p}
\end{equation}
where:
\begin{equation*}
    W_i = \sum_{j \in \mathcal{N}_i^k} w_j 
    = \sum_{j \in \mathcal{N}_i^k} 
    \frac{ \prod_{z=0}^t \frac{ \pi_{\vtheta'} ( a_j^z | s_j^z ) }{ \pi_{\vtheta} ( a_j^z | s_j^z ) } }
    {\sum_{n = 1}^N \prod_{z=0}^t \frac{ \pi_{\vtheta'} ( a_n^z | s_n^z ) }{ \pi_{\vtheta} ( a_n^z | s_n^z ) } }.
\end{equation*}
Then, by differentiating Equation~\eqref{eq:iw_estimator_p} \wrt $\vtheta'$, we have:
\begin{align}
    \nabla_{\vtheta'} \widehat{H}_k ( \bar{d}_T (\vtheta') | \bar{d}_T (\vtheta) )
    &= - \sum_{i = 1}^{N} \nabla_{\vtheta'} \bigg( \frac{ \sum_{j \in \mathcal{N}_i^k} w_j }{ k } \ln  \frac{ \sum_{j \in \mathcal{N}_i^k} w_j }{ V_i^k } + \ln k - \psi (k) \bigg) \nonumber \\
    &= - \sum_{i = 1}^{N} \bigg( \frac{ \sum_{j \in \mathcal{N}_i^k}  \nabla_{\vtheta'} w_j }{ k } \ln  \frac{ \sum_{j \in \mathcal{N}_i^k} w_j }{ V_i^k } + \frac{ \sum_{j \in \mathcal{N}_i^k} w_j }{ k } \frac{ V_i^k }{ \sum_{j \in \mathcal{N}_i^k} w_j } \sum_{j \in \mathcal{N}_i^k}  \nabla_{\vtheta'} w_j \bigg) \nonumber \\
    &=  - \sum_{i = 1}^{N} \frac{ \sum_{j \in \mathcal{N}_i^k}  \nabla_{\vtheta'} w_j }{ k } \bigg( V_i^k + \ln \frac{ \sum_{j \in \mathcal{N}_i^k} w_j }{ V_i^k } \bigg). \label{eq:iw_estimator_gradient_p}
\end{align}
Finally, we consider the expression of $\nabla_{\vtheta'} w_j$ in Equation~\eqref{eq:iw_estimator_gradient_p} to conclude the proof:
\begin{align*}
    \nabla_{\theta} w_j 
    &= w_j \nabla_{\vtheta'} \ln w_j \\
    &= w_j \nabla_{\vtheta'} \bigg( 
    \ln \prod_{z=0}^t \frac{ \pi_{\vtheta'} ( a_j^z | s_j^z ) }{ \pi_{\vtheta} ( a_j^z | s_j^z ) } 
    - \ln \sum_{n = 1}^{N}  \overbrace{\prod_{z=0}^t \frac{ \pi_{\vtheta'} ( a_n^z | s_n^z ) }{ \pi_{\vtheta} ( a_n^z | s_n^z ) } }^{\bm{Prod}_n} \bigg) \\
    &= w_j \bigg( \sum_{z = 0}^{t} \nabla_{\vtheta'} \ln \pi_{\vtheta'} ( a_j^z | s_j^z ) 
    - \frac{ \sum_{n = 1}^{N} \nabla_{\vtheta'} \bm{Prod}_n }
    { \sum_{n = 1}^{N}  \bm{Prod}_n} \bigg) \\
    &= w_j \bigg( \sum_{z = 0}^{t} \nabla_{\vtheta'} \ln \pi_{\vtheta'} ( a_j^z | s_j^z ) 
    - \frac{ \sum_{n = 1}^N \bm{Prod}_n \nabla_{\vtheta'} \ln \big( \bm{Prod}_n \big) }
    { \sum_{n = 1}^N \; \bm{Prod}_n } \bigg) \\
    &= w_j \bigg( \sum_{z = 0}^{t} \nabla_{\vtheta'} \ln \pi_{\vtheta'} ( a_j^z | s_j^z ) 
    - \frac{ \sum_{n = 1}^N  \big( \bm{Prod}_n \sum_{z = 0}^{t} \nabla_{\vtheta'} \ln \pi_{\vtheta'} (a_n^z | s_n^z) \big)}
    { \sum_{n = 1}^N \bm{Prod}_n } \bigg).
\end{align*}

\end{proof}

\section{Empirical Analysis: Further Details}
\label{apx:experiments}

\subsection{Environments}
For all the environments, we use off-the-shelf implementations from the OpenAI gym library \citep{brockman2016openai} with the exception of GridWorld, which we coded from scratch and we describe in the next paragraph.
We also slightly modified the MountainCar environment (see Figure~\ref{fig:exp_setup_env}) by adding a wall on top of the right mountain to make the environment non-episodic.

In GridWorld, the agent can navigate a map composed of four rooms connected by four hallways, as represented in Figure~\ref{fig:exp_setup_env}.
At each step the agent can choose how much to move on the x and y axes. The maximum continuous absolute change in position along any of the axes is 0.2. Each room is a space of 5 by 5 units, thus, the agent needs around 50 steps to move, along a straight line, from one side of the room to the other.
Any action that leads the agent to collide with a wall is ignored and the agent remains in the previous state (position).

\begin{figure}[h!]
\begin{center}
\includegraphics[scale=0.69]{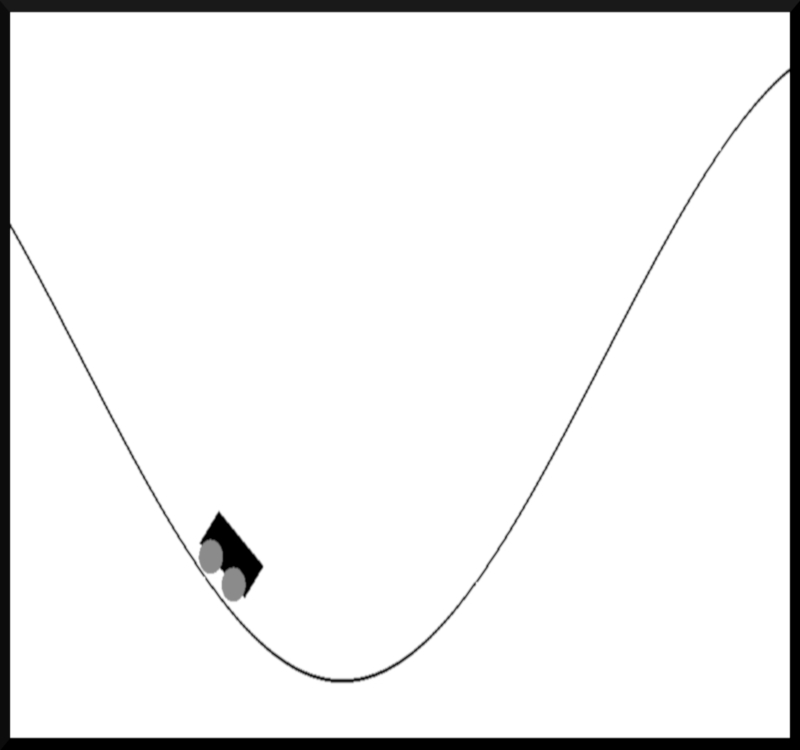}
\hspace{2cm}
\includegraphics[scale=0.4]{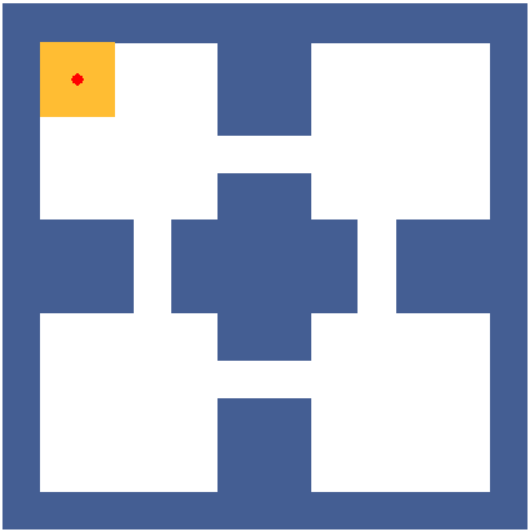}
\vskip 0.2in
\caption{Visual representation of the MountainCar (left) and GridWorld (right) environments. In Gridworld, the agent is represented with the red circle, and it starts each episode in a random position inside the yellow area.}
\label{fig:exp_setup_env}
\end{center}
\end{figure}

\subsection{Class of Policies}
\label{apx:exp_setup_policy}
In all the experiments, the policy is a Gaussian distribution with diagonal covariance matrix. It takes as input the environment state features and outputs an action vector $a \sim \mathcal{N}(\mu,\,\sigma^{2})$. The mean $\mu$ is state-dependent and is the downstream output of a densely connected neural network. The standard deviation is state-independent and it is represented by a separated trainable vector. The dimension of $\mu$, $\sigma$ and $a$ vectors is equal to the action-space dimension of the environment.

\subsection{Task-Agnostic Exploration Learning}

\subsubsection{Continuous Control Set-Up}
\label{apx:tae_setup}

Here we further comment the experimental set-up for continuous control domains that we have briefly described in Section~\ref{sec:exp_tae}, especially concerning how we select the set of features on which the entropy index is maximized.
First, in the Ant domain, we maximize the entropy over a 7D space of spatial coordinates (3D) and torso orientation (4D), excluding joint angles and velocities. This is to obtain an intermediate hurdle, in terms of dimensionality, \wrt the smaller GridWorld and MountainCar, and the most complex Humanoid and HandReach.
Instead, in the latter two domains, we essentially maximize the entropy over the full state space excluding velocities and external forces, so that we have a 24D space both in Humanoid (3D position, 4D body orientation, and all the joint angles) and HandReach (all the joint angles). We noted that including external forces does not affect the entropy maximization in a meaningful way, as they resulted always zero during training. The reason why we also discarded velocities from the entropy computation is twofold. First, we noted that velocity-related features are quite unstable and spiky, so that they can be harmful without normalization, which we avoided. Secondly, we think that maximizing the entropy over the largest set of features is not necessarily a good idea when targeting generalization (see~\ref{apx:higher_lower_level}), especially if improving the entropy over some features reduces the entropy over some other (as in the case of positions/angles and velocities).

\subsubsection{MEPOL}

You can find the implementation of MEPOL at \url{https://github.com/muttimirco/mepol}. As outlined in the pseudcode of Algorithm \ref{alg:MEPOL}, in each epoch a dataset of particles $\mathcal{D}_\tau = \{ (\tau_i^t, s_i) \}_{i = 1}^N$ is gathered for the given time horizon $T$. We call $N_{traj}$ the number of trajectories, or batch size, used to build the dataset, so that $N = N_{traj} * T$. Before starting the main loop of the algorithm we perform some training steps to force the policy to output a zero $\mu$ vector. This is instrumental to obtain a common starting point across all the seeds and can be removed without affecting the algorithm behavior.

For what concerns the k-nearest neighbors computation we use the \textit{neighbors} package from the \textit{scikit-learn} library~\citep{scikit-learn}, which provides efficient algorithms and data structures to first compute and then query nearest neighbors (KD-tree in our case). Note that the computational complexity of each epoch is due to the cost of computing the $k$-NN entropy estimation, which is $\mathcal{O} (p N \log N  )$ to build the tree ($p$ is the number of dimensions), and $\mathcal{O} (k N \log N)$ to search the neighbors for every sample. Summing it up we get a complexity in the order of $\mathcal{O} (N \log N (p + k))$ for each epoch.

\label{sec:mepol_param}
\begin{table}[H]
    \centering
    \caption{MEPOL Parameters}
    \begin{tabular}[t]{lccccc}
        \hline
        &MountainCar &GridWorld &Ant &Humanoid &HandReach\\
        \hline
        Number of epochs &650 &200 &2000 &2000 &2000 \\
        Horizon (T) &400 &1200 &500 &500 &50\\
        Batch Size ($N_{traj}$) &20 &20 &20 &20 &50 \\
        Kl threshold ($\delta$) &15.0 &15.0 &15.0 &15.0 &15.0\\
        Learning rate ($\alpha$) &$10^{-4}$ &$10^{-5}$ &$10^{-5}$ &$10^{-5}$ &$10^{-5}$ \\
        Max iters &30 &30 &30 &30 &30 \\
        Number of neighbors (k) &4 &50 &4 &4 &4\\
        Policy hidden layer sizes &(300,300) &(300,300) &(400,300) &(400,300) &(400,300)\\
        Policy hidden layer act. function &ReLU &ReLU &ReLU &ReLU &ReLU\\
        Number of seeds &8 &8 &8 &8 &8\\
        \hline
    \end{tabular}
\end{table}%

\subsubsection{MaxEnt}
\label{apx:max_ent}

As already outlined in Section \ref{sec:exp_tae}, we use the original MaxEnt implementation to deal with continuous domains
(\url{https://github.com/abbyvansoest/maxent/tree/master/humanoid}). We adopt this implementation also for the Ant and MountainCar environments, which were originally presented only as discretized domains (\url{https://github.com/abbyvansoest/maxent_base} and \url{https://github.com/abbyvansoest/maxent/tree/master/ant}). This allowed us not only to work in a truly continuous setting, as we do in MEPOL, but also to obtain better results than the discretized version. The only significant change we made is employing TRPO instead of SAC for the RL component. This is because, having tested MaxEnt with both the configurations, we were able to get slightly superior performance, and a more stable behavior, with TRPO.

In the tables below, you can find all the parameters and corresponding ranges (or sets) of values over which we searched for their optimal values. The TRPO parameters are reported employing the same notation as in~\citep{duan2016benchmarking}.
To estimate the density of the mixture, in Ant and Humanoid, we use a higher time horizon ($T_d$) than the optimized one ($T$). The reason why we do this is that, otherwise, we were not able to obtain reliable density estimations. The batch size used for the density estimation is denoted as $N_{traj\_d}$.
We also report the number of neighbors ($k$), which does not affect the learning process of MaxEnt, that we used to calculate the entropy of the mixture in the plots of Section~\ref{sec:exp_tae}. The entropy is computed over the same time horizon $T$ and the same batch size $N_{traj}$ used in MEPOL. Note that the horizon reported for TRPO is the same as the objective horizon $T$. The neural policy architectures are not reported as they are the same as in Section~\ref{sec:mepol_param}.

\begin{table}[H]
    \centering
    \caption{MaxEnt Parameters - MountainCar}
    \begin{tabular}[t]{lcc}
        \hline
        &Value &Search In\\
        \hline
        Number of neighbors (k) &4 &-\\
        Mixture Size &60 &-\\
        Density Horizon ($T_d$) &400 &-\\
        Density Batch Size ($N_{traj\_d}$) &100 &-\\
        KDE Kernel &Epanechnikov &Gaussian, Epanechnikov\\
        KDE Bandwidth &0.1 &[0.1, 2.0]\\
        PCA &No &Yes, No\\
        \hline
        & \multicolumn{2}{c}{TRPO}\\
        \hline
        Num. Iter. &50 &{40, 50, 60}\\
        Horizon &400 &-\\
        Sim. steps per Iter. &4000 &-\\
        $\delta_{KL}$ &0.1 &0.1, 0.01, 0.001\\
        Discount ($\gamma$) &0.99 &-\\
        Number of seeds &8 &-\\
        \hline
    \end{tabular}
\end{table}%

\begin{table}[H]
    \centering
    \caption{MaxEnt Parameters - Ant}
    \begin{tabular}[t]{lcc}
       \hline
        &Value &Search In\\
        \hline
        Number of neighbors (k) &4 &-\\
        Mixture Size &30 &-\\
        Density Horizon ($T_d$) &10000 &-\\
        Density Batch Size ($N_{traj\_d}$) &10 &-\\
        KDE Kernel &Epanechnikov &Gaussian, Epanechnikov\\
        KDE Bandwidth &0.2 &[0.1, 2.0]\\
        PCA &Yes (3 components) &Yes, No\\
        \hline
        & \multicolumn{2}{c}{TRPO}\\
        \hline
        Num. Iter. &300 &{300, 500}\\
        Horizon &500 &-\\
        Sim. steps per Iter. &5000 &-\\
        $\delta_{KL}$ &0.1 &0.1, 0.008\\
        Discount ($\gamma$) &0.99 &-\\
        Number of seeds &8 &-\\
        \hline
    \end{tabular}
\end{table}%

\begin{table}[H]
    \centering
    \caption{MaxEnt Parameters - Humanoid}
    \begin{tabular}[t]{lcc}
        \hline
        &Value &Search In\\
        \hline
        Number of neighbors (k) &4 &-\\
        Mixture Size &30 &-\\
        Density Horizon ($T_d$) &50000 &-\\
        Density Batch Size ($N_{traj\_d}$) &20 &-\\
        KDE Kernel &Epanechnikov &Gaussian, Epanechnikov\\
        KDE Bandwidth &1.0 &[0.1, 2.0]\\
        PCA &No &Yes, No\\
        \hline
        & \multicolumn{2}{c}{TRPO}\\
        \hline
        Num. Iter. &300 &{200, 300}\\
        Horizon &500 &-\\
        Sim. steps per Iter. &5000 &-\\
        $\delta_{KL}$ &0.1 &0.1, 0.008\\
        Discount ($\gamma$) &0.99 &-\\
        Number of seeds &8 &-\\
        \hline
    \end{tabular}
\end{table}%

\begin{table}[H]
     \centering
     \caption{MaxEnt Parameters - HandReach}
     \begin{tabular}[t]{lcc}
         \hline
         &Value &Search In\\
         \hline
         Number of neighbors (k) &4 &-\\
         Mixture Size &30 &-\\
         Density Horizon ($T_d$) &10000 &-\\
         Density Batch Size ($N_{traj\_d}$) &20 &-\\
         KDE Kernel &Epanechnikov &Gaussian, Epanechnikov\\
         KDE Bandwidth &1.0 &[0.1, 5.0]\\
         PCA &No &Yes, No\\
         \hline
         & \multicolumn{2}{c}{TRPO}\\
         \hline
         Num. Iter. &300 &{200, 300}\\
         Horizon &50 &-\\
         Sim. steps per Iter. &500 &-\\
         $\delta_{KL}$ &0.1 &-\\
         Discount ($\gamma$) &0.99 &-\\
         Number of seeds &8 &-\\
         \hline
     \end{tabular}
 \end{table}%

\subsubsection{Parameters Sensitivity}
\label{apx:sensitivity}

In Figure~\ref{fig:sensitivity}, we show how the selection of the main parameters of MEPOL impacts on the learning process. To this end, we consider a set of experiments in the illustrative MountainCar domain, where we vary one parameter at a time to inspect the change in entropy index. As we can notice, the algorithm shows little sensitivity to the number of neighbors ($k$) considered in the entropy estimation. Allowing off-policy updates through an higher KL threshold $\delta$ positively impacts the learning efficiency. Furthermore, MEPOL displays a good behavior even when we limit the batch-size.

\begin{figure*}[h!]
    \begin{subfigure}[h]{0.33\textwidth}
    \centering
		\includegraphics[valign=t]{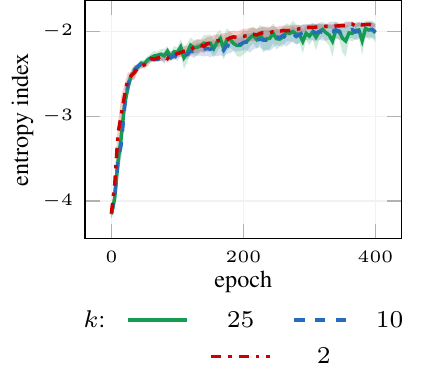}
        \caption{$k$-NN}
    \end{subfigure}
    \begin{subfigure}[h]{0.33\textwidth}
    \centering
        \includegraphics[valign=t]{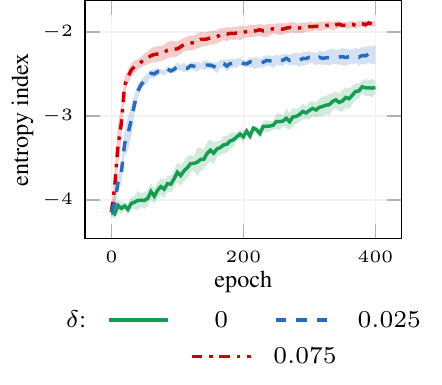}
        \caption{KL threshold}
    \end{subfigure}
    \begin{subfigure}[h]{0.33\textwidth}
    \centering
        \includegraphics[valign=t]{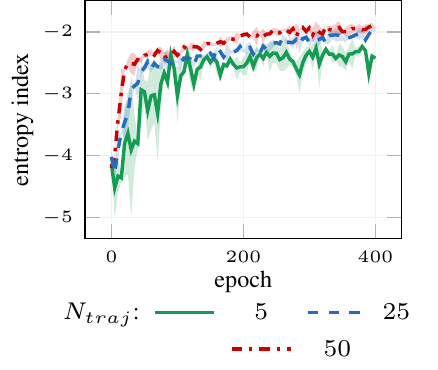}
        \caption{Batch-size}
    \end{subfigure}
    \caption{Comparison of the entropy index as a function of the learning epochs for MEPOL with different set of parameters on the MountainCar domain. (95\% c.i. over 8 runs, $T=400$ (a,b,c), $k = 4$ (b,c), $\delta = 0.05$ (a,c), $N_{traj} = 100$ (a,b)).}
    \label{fig:sensitivity}
\end{figure*}

\subsubsection{State-Visitation Heatmaps}
\label{apx:heatmaps}
In Figure~\ref{fig:grid_heatmap}, and Figure~\ref{fig:mountain_heatmap}, we report the MEPOL state-coverage evolution over GridWorld and MountainCar domains. In Figure \ref{fig:ant_heatmap}, you can see the state-coverage evolution in the Ant domain over a 12 by 12 units space, which is centered in the Ant starting position. The well-defined borders in the heatmap are due to trajectories going out of bounds. These heatmaps were created while running the experiments presented in Section~\ref{sec:exp_tae} by discretizing the continuous state-space.

\begin{figure}[H]
\centering
    \includegraphics[scale=0.4, valign=t]{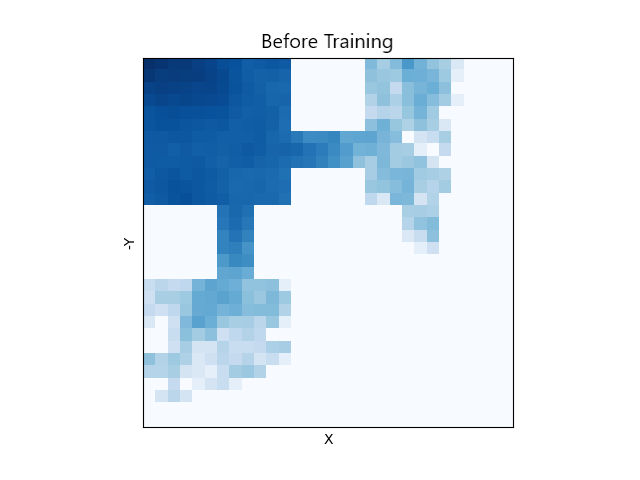}
    \includegraphics[scale=0.4, valign=t]{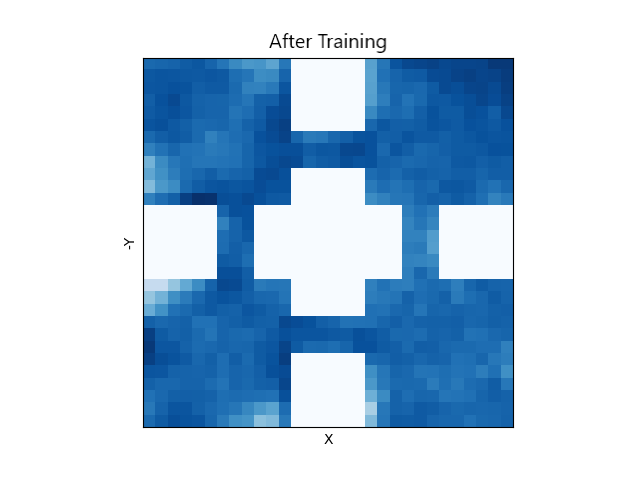}
    \caption{MEPOL log-probability state visitation evolution in the GridWorld domain created by running the policy for $N_{traj}=100$ trajectories in a time horizon of $T=1200$.}
    \label{fig:grid_heatmap}
\end{figure}

\begin{figure}[H]
\centering
    \includegraphics[scale=0.4, valign=t]{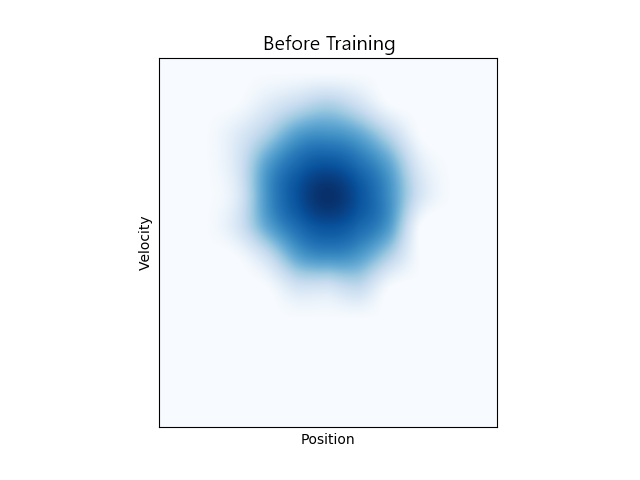}
    \includegraphics[scale=0.4, valign=t]{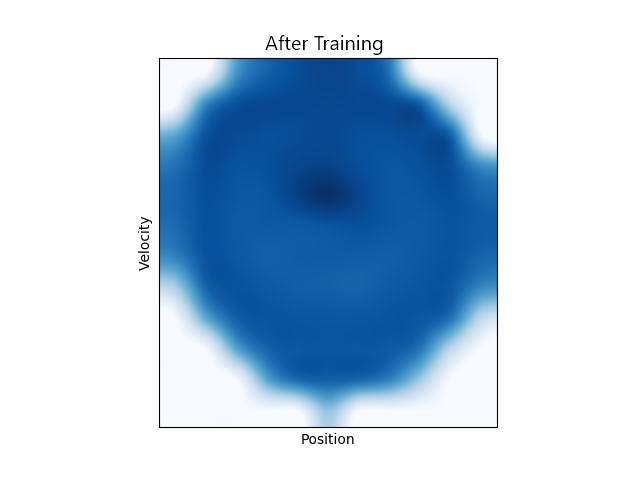}
    \vspace{0.15cm}
    \caption{MEPOL log-probability state visitation evolution in the MountainCar domain created by running the policy for $N_{traj}=100$ trajectories in a time horizon of $T=400$.}
    \label{fig:mountain_heatmap}
\end{figure}

\begin{figure}[H]
\centering
    \includegraphics[scale=0.4, valign=t]{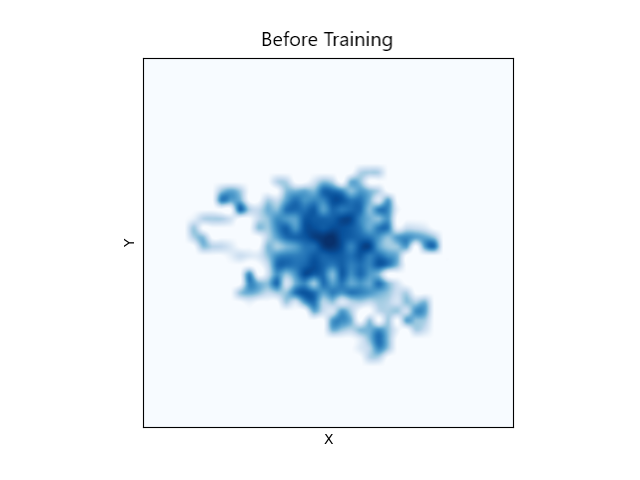}
    \includegraphics[scale=0.4, valign=t]{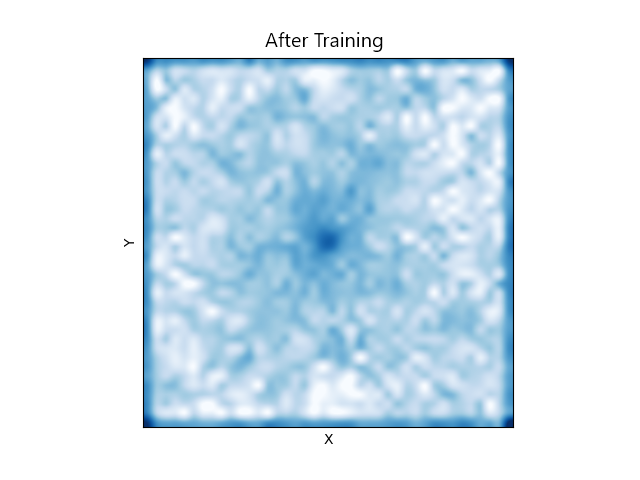}
    \vspace{0.15cm}
    \caption{MEPOL log-probability (x, y) state visitation evolution in the Ant domain created by running the policy for $N_{traj}=100$ trajectories in a time horizon of $T=500$.}
    \label{fig:ant_heatmap}
\end{figure}

\subsubsection{Discrete Entropy}
In Figure~\ref{fig:discrete_entropy_comparison}, we report the plots for the evaluation of the entropy on the 2D-discretized state-space from which we have taken the values reported in Figure~\ref{fig:discrete_entropy}.
In Ant and Humanoid, the considered state-space is the 2D, discretized, agent's position (x, y).
These plots were created while running the experiments in Section~\ref{sec:exp_tae}.

\begin{figure*}[h]
	\centering \includegraphics[scale=0.98]{plot_files/legend_illustrative.pdf}
	\vspace{-0.07cm}

    \begin{subfigure}[t]{0.32\textwidth}
    \centering
		\includegraphics[valign=t]{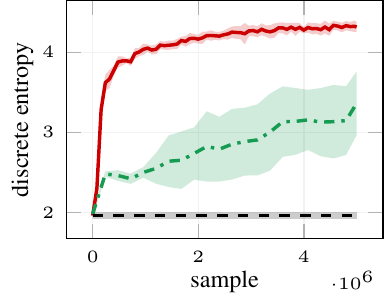}
        \caption{MountainCar}
        \label{fig:discrete_entropy_mountain_car}
    \end{subfigure}
    \begin{subfigure}[t]{0.32\textwidth}
    \centering
        \includegraphics[valign=t]{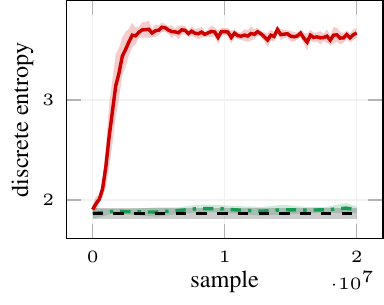}
        \caption{Ant}
        \label{fig:discrete_entropy_ant_torso}
    \end{subfigure}
    \begin{subfigure}[t]{0.32\textwidth}
    \centering
        \includegraphics[valign=t]{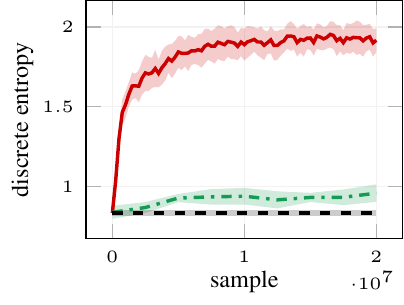}
        \caption{Humanoid}
        \label{fig:discrete_entropy_humanoid}
    \end{subfigure}
    \caption{
    Comparison of the entropy computed on the 2D-discretized state-space as a function of training samples achieved by MEPOL, MaxEnt, and a random policy in the MountainCar, Ant and Humanoid domains in the setting presented in Section \ref{sec:exp_tae} (95\% c.i. over 8 runs).
    }
    \label{fig:discrete_entropy_comparison}
\end{figure*}

\subsection{Goal-Based Reinforcement Learning}

\subsubsection{Algorithms}
We use the TRPO implementation from OpenAI's SpinningUp library \url{https://github.com/openai/spinningup}. For SAC, we adopt the codebase from \url{https://github.com/microsoft/oac-explore}. We use the original SMM codebase \url{https://github.com/RLAgent/state-marginal-matching}, which provides also an implementation of ICM, and Pseudocount.

\subsubsection{Parameters Detail}
In Table \ref{table:apx_trpo}, we report the TRPO parameters used in the tasks, following the same notation in \citep{duan2016benchmarking}. Both the basic agent and the MEPOL agent use the same parameters. In Table \ref{table:apx_sac}, we report the SAC parameters, following the notation in \citep{haarnoja2018soft}. In SMM we use 4 skills. In SMM, ICM, and Pseudocount we equally weight the extrinsic and the intrinsic components, as we haven't seen any improvement doing otherwise. Note that SMM, ICM, and Pseudocount are built on top of SAC for which we adopt the same parameters as in Table \ref{table:apx_sac}. The neural policy architectures are not reported as they are the same as in Section~\ref{sec:mepol_param}.

\begin{table}[H]
    \centering
    \caption{TRPO Parameters for Goal-Based Reinforcement Learning }
    \label{table:apx_trpo}
    \begin{tabular}[t]{lccccc}
        \hline
        &GridWorld &AntEscape &AntJump &AntNavigate &HumanoidUp\\
        \hline
        Num. Iter. &100 &500 &1000 &1000 &2000 \\
        Horizon &1200 &500 &500 &500 &2000\\
        Sim. steps per Iter. &12000 &5000 &50000 &50000 &20000 \\
        $\delta_{KL}$ &$10^{-4}$ &$10^{-2}$ &$10^{-2}$ &$10^{-2}$ &$10^{-2}$\\
        Discount ($\gamma$) &0.99 &0.99 &0.99 &0.99 &0.99\\
        Number of seeds &8 &8 &8 &8 &8\\
        \hline
    \end{tabular}
\end{table}%

\begin{table}[H]
    \centering
    \caption{SAC Parameters for Goal-Based Reinforcement Learning}
    \label{table:apx_sac}
    \begin{tabular}[t]{lccccc}
        \hline
        &GridWorld &AntEscape &AntJump &AntNavigate &HumanoidUp\\
        \hline
        Epoch &100 &500 &1000 &1000 &2000\\
        Num. Updates &12000 &5000 &5000 &5000 &6000\\
        Learning Rate &$3\cdot10^{-4}$ &$3\cdot10^{-4}$ &$3\cdot10^{-4}$ &$3\cdot10^{-4}$ &$3\cdot10^{-4}$\\
        Discount ($\gamma$) &0.99 &0.99 &0.99 &0.99 &0.99\\
        Replay buffer size ($\gamma$) &$10^6$ &$10^6$ &$10^6$ &$10^6$ &$10^6$\\
        Number of samples per mini batch &256 &256 &256 &256 &256\\
        Number of seeds &8 &8 &8 &8 &8\\
        \hline
    \end{tabular}
\end{table}%

\subsubsection{Higher-Level and Lower-Level Policies}
\label{apx:higher_lower_level}

In this section, we discuss the performance achieved by MEPOL policies when facing higher-level tasks, such as 2D navigation (Figure~\ref{fig:ant_navigate_HL}) and standing-up (Figure~\ref{fig:humanoid_up_HL}). Especially, we can see that higher-level MEPOL policies, which are trained to maximize the entropy over spatial coordinates (x-y in Ant, x-y-z in Humanoid), outperform lower-level MEPOL policies, which are trained to maximize entropy over spatial coordinates, orientation, and joint angles (as reported in Section~\ref{sec:exp_tae}). This is not surprising, since higher-level policies better match the level of abstraction of the considered tasks. However, it is worth noting that also lower-level policies achieve a remarkable initial performance, and a positive learning trend, albeit experiencing lower convergence.

\begin{figure*}[h!]
	\centering \includegraphics[scale=0.98]{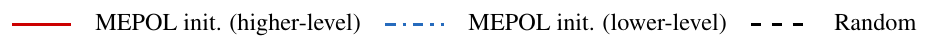}
	\vspace{-0.07cm}

    \begin{subfigure}[t]{0.32\textwidth}
    \centering
		\includegraphics[valign=t]{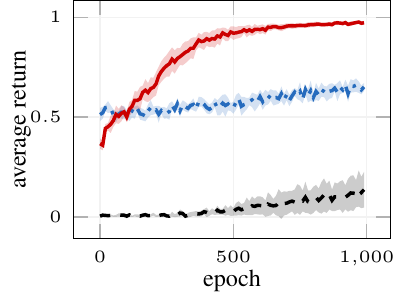}
        \caption{Ant Navigate}
        \label{fig:ant_navigate_HL}
    \end{subfigure}
    \begin{subfigure}[t]{0.32\textwidth}
    \centering
        \includegraphics[valign=t]{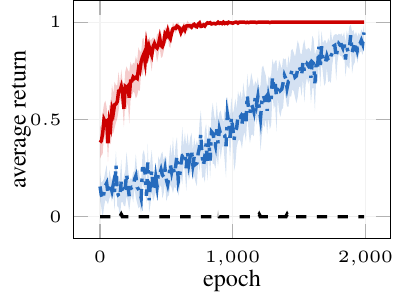}
        \caption{Humanoid Up}
        \label{fig:humanoid_up_HL}
    \end{subfigure}
    \caption{Comparison of the average return as a function of learning epochs achieved by TRPO with MEPOL initialization (higher-level, lower-level) and a Random initialization (95\% c.i. over 8 runs).}
    \label{fig:HL_comparison}
\end{figure*}

\subsection{Additional Experiments}
In this section, we present two additional experiments. First, we present another experiment in the HandReach domain in which we maximize the entropy over the 3D position of each of the fingertip, for a total of 15 dimensions, whose performance is reported in Figure \ref{fig:hand_reach15_entropy}.
Then, we present MEPOL in the AntMaze domain, in which the objective is to uniformly cover the 2D position of the agent, moving in a maze. The performance is reported, together with a visual representation of the environment, and a state-visitation evolution, in Figure~\ref{fig:ant_maze_entropy}.

\begin{figure}[ht]
\begin{center}
\centerline{\includegraphics[scale=1]{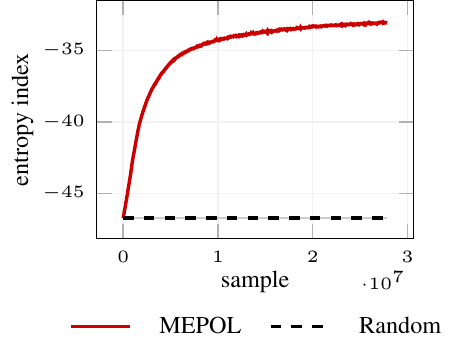}}
\caption{
Performance of the entropy index as a function of training samples achieved by MEPOL and a random policy in the additional HandReach experiment (95\% c.i. over 8 runs, $k = 4$, $T = 50$, $N_{traj}=100$, $\delta=0.05$).}
\label{fig:hand_reach15_entropy}
\end{center}
\end{figure}

\begin{figure}[ht]
  \begin{subfigure}[b]{0.5\linewidth}
    \centering
    \includegraphics[scale=0.275]{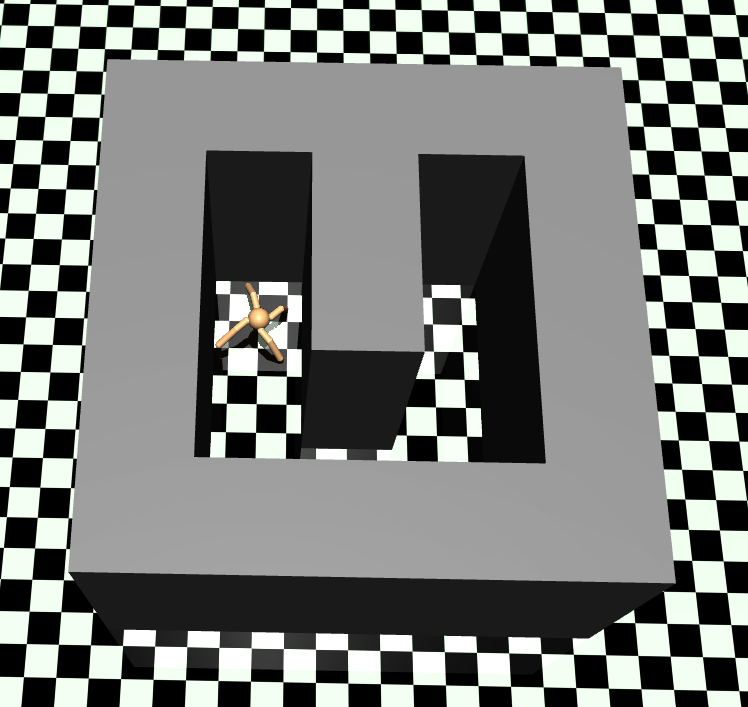}
    \caption{}
    \vspace{4ex}
  \end{subfigure}
  \begin{subfigure}[b]{0.5\linewidth}
    \centering
    \hspace{-0.5cm}\includegraphics[scale=1]{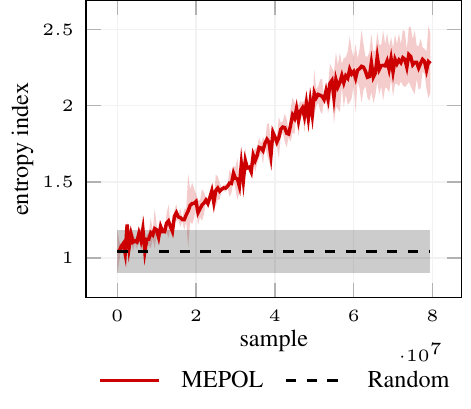}
    \caption{}
    \vspace{3.5ex}
  \end{subfigure}
  \begin{subfigure}[b]{0.5\linewidth}
    \centering
    \includegraphics[scale=0.425]{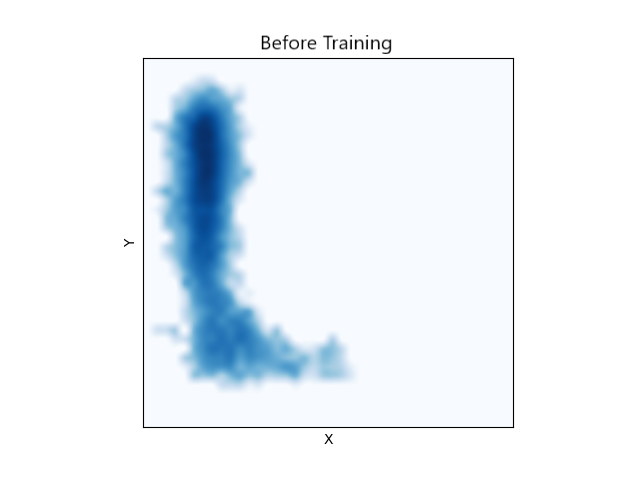}
    \caption{}
  \end{subfigure}
  \begin{subfigure}[b]{0.5\linewidth}
    \centering
    \includegraphics[scale=0.425]{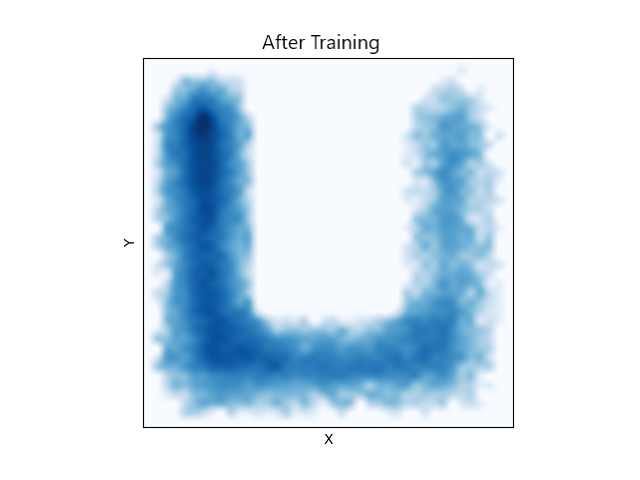}
    \caption{}
  \end{subfigure}
  \caption{
  Performance of the entropy index (b) as a function of training samples achieved by MEPOL and a random policy in the AntMaze environment (a) (95\% c.i. over 2 runs, $k = 50$, $T = 500$, $N_{traj}=100$, $\delta=0.05$) together with the log-probability state evolution before (c) and after (d) training created by running the policy for $N_{traj} = 100$ trajectories in the optimized time horizon.}
  \label{fig:ant_maze_entropy}
\end{figure}


\end{document}